\documentclass{article}




    \usepackage[final]{neurips_2022}


\usepackage[utf8]{inputenc} 
\usepackage[T1]{fontenc}    
\usepackage{hyperref}       
\usepackage{url}            
\usepackage{booktabs}       
\usepackage{amsfonts}       
\usepackage{nicefrac}       
\usepackage{microtype}      
\usepackage{xcolor}         

\usepackage{graphicx}
\usepackage{amsthm}
\usepackage{amssymb}
\usepackage{amsmath}
\usepackage{mathtools}
\usepackage{dirtytalk}
\usepackage{subcaption}
\usepackage{wrapfig}
\usepackage{placeins}
\usepackage{color}
\usepackage{soul}

\usepackage[noend,ruled,vlined,linesnumbered]{algorithm2e}
\usepackage{setspace}

\SetCommentSty{mycommfont}

\theoremstyle{definition}
\newtheorem{definition}{Definition}
\newtheorem{assumption}{Assumption}
\newtheorem{lemma}{Lemma}
\newtheorem{proposition}{Proposition}
\newtheorem{theorem}{Theorem}

\newcommand{\repo}[1]{\href{\repobase}{\underline{#1}}}
\newcommand{\code}[2]{\href{\repobase/blob/main/#1}{\underline{#2}}}
\newcommand{\pypi}[1]{\href{https://pypi.org/project/cross-entropy-method/}{\underline{#1}}}

\title{Efficient Risk-Averse Reinforcement Learning}

%

\author{
  Ido Greenberg \\ 
  Technion \\
  \texttt{gido@campus.technion.ac.il}
  \And
  Yinlam Chow \\
  Google Research \\
  \texttt{yinlamchow@google.com}
  \And
  Mohammad Ghavamzadeh \\
  Google Research \\
  \texttt{ghavamza@google.com}
  \And
  Shie Mannor \\
  Technion, Nvidia Research \\
  \texttt{shie@ee.technion.ac.il}
}

\begin{document}

\maketitle

\begin{abstract}
In risk-averse reinforcement learning (RL), the goal is to optimize some risk measure of the returns. A risk measure often focuses on the worst returns out of the agent's experience. As a result, standard methods for risk-averse RL often ignore high-return strategies. 
We prove that under certain conditions this inevitably leads to a local-optimum barrier, and propose a mechanism we call soft risk to bypass it. We also devise a novel cross entropy module for sampling, which (1) preserves risk aversion despite the soft risk; (2) independently improves sample efficiency. By separating the risk aversion of the sampler and the optimizer, we can \textit{sample} episodes with poor conditions, yet \textit{optimize} with respect to successful strategies.
We combine these two concepts in CeSoR -- Cross-entropy Soft-Risk optimization algorithm -- which can be applied on top of \textit{any} risk-averse policy gradient (PG) method. We demonstrate improved risk aversion in maze navigation, autonomous driving, and resource allocation benchmarks, including in scenarios where standard risk-averse PG completely fails.
Our results and CeSoR implementation are available on \repo{Github}.
The stand-alone cross entropy module is available on \pypi{PyPI}.

\end{abstract}


\section{Introduction}
\label{sec:intro}

Risk-averse reinforcement learning (RL) is important for high-stake applications,
such as driving, robotic surgery, and finance~\citep{risk_averse_rl_finance}. In contrast to risk-neutral RL, it optimizes a risk measure of the return random variable, rather than its expectation. A popular risk measure is the Conditional Value at Risk (CVaR), defined as $\text{CVaR}_\alpha(R) = \mathbb{E}\left[ R \,|\, R\le q_\alpha(R) \right]$, where $q_\alpha(R)=\inf\{x\,|\,F_R(x)\ge\alpha\}$ is the $\alpha$-quantile of the random variable $R$ and $F_R$ is its CDF.
Intuitively, CVaR measures the expected return below a specific quantile $\alpha$, also termed the risk level.
CVaR optimization is widely researched in the RL community, e.g., using adjusted policy gradient approaches (CVaR-PG)~\citep{cvar_via_sampling,cvar_options}.
In addition, CVaR is a coherent risk measure, and its optimization is equivalent to a robust optimization problem~\citep{cvar_vs_robustness}.

\begin{figure}[!ht]
\vspace{-10pt}
\centering
\begin{subfigure}{.32\textwidth}
  \centering
  \includegraphics[width=1.\linewidth]{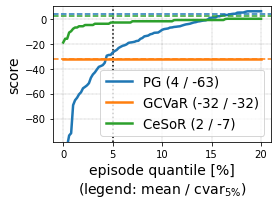}
  \caption{Guarded Maze}
  \label{fig:maze_test_scores}
\end{subfigure}
\begin{subfigure}{.32\textwidth}
  \centering
  \includegraphics[width=1.\linewidth]{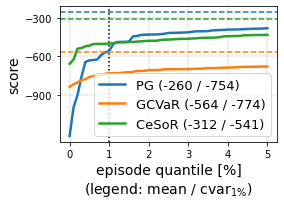}
  \caption{Driving Game}
  \label{fig:driving_test_scores}
\end{subfigure}
\begin{subfigure}{.31\textwidth}
  \centering
  \includegraphics[width=1.\linewidth]{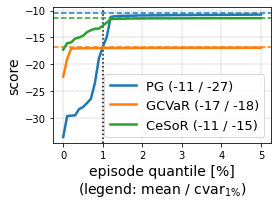}
  \caption{Servers Allocation}
  \label{fig:servers_test_scores}
\end{subfigure} \\
\begin{subfigure}{.31\textwidth}
  \centering
  \includegraphics[width=1.\linewidth]{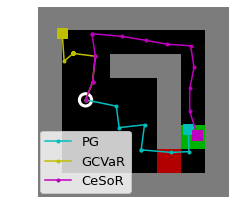}
  \caption{CeSoR learns to avoid the risk (red) and take the long path to the target (green), whereas GCVaR fails due to \textit{blindness to success}.}
  \label{fig:maze_nice}
\end{subfigure}\hspace{0.01\textwidth}
\begin{subfigure}{.315\textwidth}
  \centering
  \includegraphics[width=1.\linewidth]{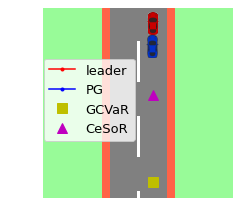}
  \caption{CeSoR maintains a safe margin from the leader, while PG has an accident and GCVaR maintains a too conservative distance.}
  \label{fig:driving_nice}
\end{subfigure}\hspace{0.01\textwidth}
\begin{subfigure}{.31\textwidth}
  \centering
  \includegraphics[width=1.\linewidth]{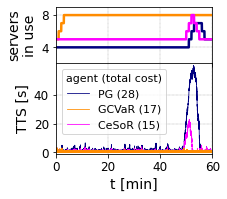}
  \caption{CeSoR handles the exceptional peak in user-requests without paying for as many servers as GCVaR, while PG fails to handle the peak.}
  \label{fig:servers_nice}
\end{subfigure}
\caption{\small Over 3 benchmarks, test results of 3 agents: risk-neutral PG, standard CVaR-PG (GCVaR,~\citet{cvar_via_sampling}), and our CeSoR. Top: the lower quantiles of the returns distributions. Bottom: sample episodes.}
\vspace{-15pt}
\label{fig:awesome}
\end{figure}

Since risk-averse RL aims to avoid the hazardous parts of the environment (e.g., dangerous areas in navigation), CVaR-PG algorithms typically sample a batch of $N$ trajectories (episodes), and then optimize w.r.t.~the mean of the $\alpha N$ trajectories with worst returns~\citep{cvar_via_sampling,cvar_trpo}. This approach suffers from two major drawbacks: (i) $1-\alpha$ of the batch is wasted and excluded from the optimization (where often $0.01\le \alpha\le0.05$), leading to sample inefficiency; (ii) focusing on the worst episodes inherently overlooks good agent strategies corresponding to high returns -- a phenomenon we refer to as the \textit{blindness to success}.

\textbf{An illustrative example -- the Guarded Maze}: Consider the Guarded Maze benchmark (visualized in Figure~\ref{fig:maze_nice}).
The goal is to reach the target zone (a constant location marked in green), resulting in a reward of $16$ points. However, the guarded zone (in red) \textit{may} be watched by a guard who demands a payment from any agent that passes by.
Every episode, the probability that a guard is present is $\phi_1=20\%$, and the payment is exponentially-distributed with average $\phi_2=32$. That is, the cost of crossing the guarded zone in a certain episode is $C = C_1\cdot C_2$, where $C_1\sim Ber(\phi_1), C_2\sim Exp(\phi_2)$ are independent and unknown to the agent.
The agent starts at a random point at the lower half, and every time-step, observing its location, it selects one an action: left, right, up or down, with an additive control noise. 
One point is deducted per step, up to $32$ deductions.

In this maze, the shortest path maximizes the average return; yet, the longer path is CVaR-optimal, since sometimes short cuts make long delays~\citep{tolkien}.
However, the standard CVaR-PG optimizer (GCVaR in Figure~\ref{fig:awesome}) suffers from blindness to success: in a batch of $N$ random episodes, the worst $\alpha N$ returns (e.g., for $\alpha=5\%$) usually correspond to either encountering a guard in the short path, or not reaching the goal at all. Hence, the desired long path is never even observed by the CVaR-PG optimizer, and cannot be learned.

Our key insight is that the variation in returns comes from both environment conditions (\emph{epistemic} uncertainty) and agent actions (\emph{aleatoric} uncertainty). We wish to focus on the \textit{low quantiles w.r.t.~the conditions} (e.g., a costly guard in the short path of the maze), yet to be exposed to the \textit{high quantiles w.r.t.~the strategies} (e.g., taking the long path in the maze). To that end, we devise two mechanisms: first, we use a soft risk-level scheduling method, which begins the training with risk neutrality $\alpha^\prime=1$, and gradually shifts the risk aversion to $\alpha^\prime=\alpha$. Second, we present a novel dynamic-target version of the Cross Entropy method (CE or CEM)~\citep{CE_tutorial}, aiming to sample the worst parts of the environment. That is, the CEM samples trajectories with more challenging or riskier conditions, and the soft risk feeds a larger part of them ($\alpha^\prime\ge\alpha$) to the CVaR-PG optimizer. Together, these constitute the Cross-entropy method for Soft-Risk optimization (\textbf{\textit{CeSoR}}). CeSoR can be applied on top of any CVaR-PG method to learn any differentiable model (e.g., a neural network).

To apply the CEM, we assume to have certain control over the environment conditions. For example, in driving we may choose the roads for collecting training data, or in any simulation we may control the environment parameters (e.g., $\phi_1,\phi_2$ in the Guarded Maze).
Note that (i) only the CE sampler (not the agent) is aware of the conditions; (ii) their underlying effect is unknown to the sampler and may vary with the agent throughout the training, hence the CEM needs to learn it adaptively.

\textbf{Contribution:} We present the following contribution for PG algorithms under risk-sensitive MDP problems (as defined in Section \ref{sec:preliminaries}):
\begin{enumerate}
    \item We analyze the phenomenon of {\em blindness to success} in the standard CVaR-PG, and show that it leads to a local-optimum barrier in certain environments (Section \ref{sec:analysis_blindness}).
    \item We analyze the potential increase in sample efficiency -- if we could sample directly from the tail of the returns distribution (Section \ref{sec:variance_reduction}).
    \item We introduce the CeSoR algorithm (Section \ref{sec:method}), which modifies any CVaR-PG method with: (i) a soft risk mechanism preventing blindness to success; (ii) a novel dynamic-CE method that over-samples the riskier realizations of the environment, increasing sample efficiency.
    \item We demonstrate the effectiveness of CeSoR in 3 risk-sensitive domains (Section~\ref{sec:experiments}), where it learns faster and achieves higher returns (both CVaR and mean) than the baseline CVaR-PG.
\end{enumerate}

\subsection{Related Work}
\label{sec:related_work}

Optimizing risk in RL is crucial to enforce safety in decision-making~\citep{safe_RL,rwrl_challenges}.
It has been long studied through various risk criteria, e.g., mean-variance~\citep{variance_TD,Prashanth13AC,Prashanth16VC,Xie18BC}, entropic risk measure~\citep{Borkar2002a,Borkar2014,exponential_bellman} and distortion risk measures~\citep{PG_for_distortion_risk}. \citet{pg_for_risk_measures} derived a PG method for general coherent risk measures, given their risk-envelope representation.

The CVaR risk measure specifically was studied using
value iteration \citep{cvar_vs_robustness} and distributional RL \citep{distributional_rl_risk,WorstCasePG,qt_opt} (also discussed in Appendix~\ref{sec:dRL}).
CVaR optimization was also shown equivalent to mean optimization under robustness \citep{cvar_vs_robustness}, motivating robust-RL methods \citep{robust_adversarial,cvar_adversarial}.
Yet, PG remains the most popular approach for CVaR optimization in RL \citep{cvar_via_sampling,cvar_trpo,cvar_options,risk_pg_convergence}, and can be flexibly applied to a variety of use-cases, e.g., mixed mean-CVaR criteria \citep{mean_cvar_pg} and multi-agent problems \citep{RMIX}.

Optimizing the CVaR for risk levels $\alpha\ll1$ poses a significant sample efficiency challenge, as only a small portion of the agent's experience is used to optimize its policy~\citep{adaptive_sampling_risk_aversion}. \citet{being_optimistic} used an exploration-based approach to address the sample efficiency.
Pessimistic sampling for improved sample efficiency was suggested heuristically by \citet{cvar_via_sampling} using a dedicated value function, but no systematic method was suggested to direct the pessimism level.
In this work, we use the CEM to control the sampled episodes around the desired risk level $\alpha$, and demonstrate CVaR optimization for as extreme levels as $\alpha=1\%$.
Note that unlike other CE-optimizers in RL~\citep{ce_for_policy_search,cem_gd}, we use the CEM for \textit{sampling}, to support a gradient-based optimizer.



\section{Problem Formulation}
\label{sec:preliminaries}

Consider a Markov Decision Process (MDP) $(S,A,P,\gamma,P_0)$, corresponding to states, actions, state-transition and reward distribution, discount factor, and initial state distribution, respectively.
For any policy parameter $\theta\in\mathbb R^n$, we denote by $\pi_\theta$ the parameterized policy that maps a state to a probability distribution over actions. Given a state-action-reward trajectory $\tau=\{(s_t,a_t,r_t)\}_{t=0}^T$, the trajectory total return is denoted by $R(\tau) = \sum_{t=0}^T \gamma^t r_t$. 
The expected return of a policy $\pi_\theta$ is defined as 
\begin{equation}
\label{eq:risk-neutral-obj}
J(\pi_\theta) = \mathbb{E}_{\tau\sim P^{\pi_\theta}}\left[ R(\tau) \right],  
\end{equation}
%
where $P^{\pi_\theta}(\tau)= P_0(s_0)\prod_{t=0}^{T-1} P(s_{t+1},r_t|s_t,a_t)\pi_\theta(a_t|s_t)$ is the probability distribution of $\tau$ induced by $\pi_\theta$. Under the risk-neutral objective, the PG method uses the gradient $\nabla_\theta J(\pi_\theta)$ to learn $\theta$, aiming to increase the probability of actions that lead to higher returns.
In contrast, CVaR-PG methods aim to optimize the risk-averse CVaR$_\alpha$ objective (w.r.t.~a given risk level $\alpha$):
%
\begin{equation}\label{eq:cvar}J_\alpha(\pi_\theta) = \mathbb{E}_{\tau\sim P^{\pi_\theta}}\left[ R(\tau) \,|\, R(\tau)\le q_\alpha(R|\pi_\theta) \right],
\end{equation}
where $q_\alpha(R|\pi_\theta)$ is the $\alpha$-quantile of the return random variable of policy $\pi_\theta$. Thus, CVaR-PG algorithms aim to improve the actions specifically for episodes whose returns are lower than $q_\alpha(R|\pi_\theta)$. 
Specifically, given a batch of $N$ trajectories $\{\tau_i\}_{i=1}^N$ whose empirical return quantile is $\hat{q}_\alpha=\hat{q}_\alpha(\{R(\tau_i)\}_{i=1}^N)$, the CVaR gradient estimation is given by~\citep{cvar_via_sampling}:
\begin{align}
\label{eq:GCVaR_grad}
    \nabla_\theta \hat{J}_\alpha(\{\tau_i\}_{i=1}^N;\, \pi_\theta) = \frac{1}{\alpha N} \sum_{i=1}^N w_i \cdot \pmb{1}_{R(\tau_i)\le \hat{q}_\alpha} \left(R(\tau_i)-\hat{q}_\alpha\right) \sum_{t=0}^T\nabla_\theta\log \pi_\theta(a_{i,t};s_{i,t}),
\end{align}
where $w_i = P^{\pi_\theta}(\tau_i) / f(\tau_i\,|\,\pi_\theta)$ is the importance sampling (IS) correction factor for $\tau_i$, if $\tau_i$ is sampled from a distribution $f \ne P^{\pi_\theta}$.
Specifically, as discussed below, we modify the sample distribution using the cross entropy method over a context-MDP formulation of the environment.





\textbf{Context-MDP:}
As mentioned above, we aim to focus on high-risk environment conditions.
To discuss the notion of conditions, given a standard MDP, we extend its formulation to a Context-MDP (C-MDP)~\citep{CMDP}, where the \textit{context} is a set of variables that capture (part or all of) the randomness of the original MDP.
We define the extension as $(S,A,\mathcal{C},P_C,\gamma, P_0, D_{\phi_0})$, where $C\in \mathcal{C}$ is sampled from the context space $\mathcal{C}$ according to the distribution $D_{\phi_0}$ parameterized by $\phi_0$, and $P_C(\cdot)=P(\cdot|C)$ is the transition and reward distribution conditioned on $C$. 
In a C-MDP, a context-trajectory pair is sampled from the distribution $P^{\pi_\theta}_{\phi_0}(C,\tau) = D_{\phi_0}(C)P_C^{\pi_\theta}(\tau)$, where $P_C^{\pi_\theta}(\tau) = P_0(s_0)\prod_{t=0}^{T-1} P_C(s_{t+1},r_t|s_t,a_t)\pi_\theta(a_t|s_t)$. The mean and $\text{CVaR}_\alpha$ objectives $J(\pi_\theta)$, $J_\alpha(\pi_\theta)$ in Equations~\eqref{eq:risk-neutral-obj}~and~\eqref{eq:cvar} are naturally generalized to C-MDP using the distribution $P^{\pi_\theta}_{\phi_0}(C,\tau)$.

Once we extend an MDP into a C-MDP, we can learn how to modify the context-distribution parameter $\phi$ to sample high-risk contexts and trajectories, focusing the training on high-risk parts of the environment and thus improving sample efficiency.
For this, we assume that certain aspects of the training environment (represented by $C$) can be controlled.
This assumption indeed holds in many practical applications -- in both simulated and physical environments.
For example, consider a data collection procedure for a self-driving agent training, which by default samples all driving hours uniformly: $C\sim U([0,24))$.
As the hour may affect traffic and driving patterns, a risk-averse driver would prefer to sample more experience in high-risk hours.
To that end, we could re-parameterize the uniform distribution as, say, $Beta(\phi)$ with ${\phi_0}=(1,1)$ (note that $Beta(1,1)$ is the uniform distribution), learn the high-risk hours, and modify $\phi$ to over-sample them.
As another example, in the Guarded Maze described above, we can control the parameters $\phi_1,\phi_2$ of the simulation.

\section{Limitations of CVaR-PG}
\label{sec:analysis}

Consider the standard CVaR-PG algorithm, which relies on Equation~\eqref{eq:GCVaR_grad} to apply PG for maximization of $J_\alpha(\pi_\theta)$ of~\eqref{eq:cvar}.
In this section, we analyze two major limitations of this algorithm.
Section~\ref{sec:analysis_blindness} analyzes the \emph{blindness to success} phenomenon, which may bring CVaR-PG learning to a local-optimum deadlock. This will motivate the soft-risk scheduling in Section~\ref{sec:method}.
Section~\ref{sec:variance_reduction} analyzes the potential increase in sample efficiency when the environmental context is sampled in correspondence to the tail of the returns distribution. This will motivate the cross-entropy sampler in Section~\ref{sec:method}.


While the analysis focuses on CVaR-PG methods, Appendix~\ref{sec:dRL} discusses Distributional RL algorithms for CVaR optimization, and demonstrates that similar limitations apply to these methods as well.


\subsection{Blindness to Success}
\label{sec:analysis_blindness}

We formally analyze how the \emph{blindness to success} phenomenon can bring the policy learning to a local-optimum deadlock by ignoring successful agent strategies.

Recall the $\alpha$-quantile of a return distribution $q_\alpha^{\pi}=\min\{r\,|\,F_{R(a)|\pi}(r)\ge\alpha\}$. We first introduce the notion of a \emph{tail barrier}, corresponding to a returns-distribution tail with a constant value.
%
\begin{definition}[Tail barrier]
\label{def:const_tail}
Let $\alpha\in(0,1]$. A policy $\pi$ has an $\alpha$-tail barrier if $\forall \alpha^\prime\in[0,\alpha]: q_{\alpha^\prime}^\pi = q_\alpha^\pi$.
\end{definition}
Note that in any environment with a discrete rewards distribution, a policy is prone to having a tail barrier for some $\alpha > 0$.
In existing CVaR-PG analysis~\citep{cvar_via_sampling}, such barriers are often overlooked by assuming continuous rewards.
For the Guarded Maze, Figure~\ref{fig:maze_scores_30} in the appendix demonstrates how a standard CVaR-PG exhibits a $0.9$-tail barrier, since as many as 90\% of the trajectories reach neither the target nor the guard, and thus have identical low returns.


A tail barrier has a destructive effect on CVaR-PG.
Consider a CVaR$_\alpha$ objective, and a policy $\pi$ with a $\beta$-tail barrier where $\beta>\alpha$.
Intuitively, any infinitesimal change of $\pi$ cannot affect the CVaR return, since the returns infinitesimally-above $q_\alpha^\pi$ are identical to those below $q_\alpha^\pi$.
That is, any tail barrier wider than $\alpha$ brings the CVaR-PG to a stationary point of type plateau.
More formally, consider $\nabla_\theta \hat{J}_\alpha$ of Equation~\eqref{eq:GCVaR_grad} with a $\beta$-tail barrier $\beta>\alpha$: any trajectory has either $\pmb{1}_{R(\tau_i)\le q_\alpha^\pi}=0$ (if its return is above $q_\alpha^\pi$) or $R(\tau_i)-q_\alpha^\pi=0$ (otherwise), hence the whole gradient vanishes.
Such a loss plateau was also observed in a specific MDP in Section~5.1 of \citet{cvar_pg_suboptimality}.

In practice, a discrepancy between $q_\alpha^\pi$ and its estimate $\hat{q}_\alpha(\{R(\tau_i)\})$ (used in Equation~\ref{eq:GCVaR_grad}) may prevent the gradient from completely vanishing, if $q_\alpha^\pi = q_\beta^\pi < \hat{q}_\alpha(\{R(\tau_i)\})$.
Otherwise, if $\hat{q}_\alpha(\{R(\tau_i)\}) \le q_\beta^\pi$ in every subsequent iteration, the gradient remains zero, the policy cannot learn any further, and any trajectory returns beyond $q_\alpha^\pi$ will never be even propagated to the optimizer.
We refer to this phenomenon as \textit{blindness to success}.

\begin{definition}[Blindness to success]
\label{def:blindness}
Let a risk level $\alpha\in(0,1)$ and a CVaR-PG training step $m_0 \ge 1$, and let $\beta\in(\alpha,1)$.
Denote by $\mathcal{T},\Pi$ the spaces of trajectories and policies, respectively, and by $\{\tau_{m,i}\}_{i=1}^N \sim P^{\pi_m}$ the random trajectories in step $m\ge m_0$.
We denote by $\mathcal{B}_{\alpha,\beta}^{m_0,n}$ the event of blindness to success in the subsequent $n$ steps (and the complementary event by $\lnot \mathcal{B}_{\alpha,\beta}^{m_0,n}$):
\[\mathcal{B}_{\alpha,\beta}^{m_0,n} = \Big\{ \left\{ \left( \left\{\tau_{m,i}\right\}_{i=1}^N, \, \pi_m \right) \right\}_{m_0\le m<m_0+n} \in (\mathcal{T}^N \times \Pi)^n \ \Big| \ \forall m:\ \hat{q}_\alpha(\{R(\tau_{m,i})\}) \le q_\beta^{\pi_{m_0}} \Big\}.\]
\end{definition}


Note that Definition~\ref{def:blindness} uses $q_\beta^{\pi_{m_0}}$ (corresponding to step $m_0$) to bound the returns in training steps $m>m_0$, thus indeed represents training stagnation.
Theorem~\ref{theorem:blindness_to_success} shows that given a $\beta$-tail barrier with $\beta>\alpha$, the probability that CVaR-PG avoids the blindness to success decreases exponentially with $\beta-\alpha$.
For example, for $n=10^6$, $\alpha=0.05$, $\beta=0.25$, and $N=400$, we have $\mathbb{P}( \lnot \mathcal{B}_{\alpha,\beta}^{m_0,n} ) < 10^{-7}$.

\begin{theorem}
\label{theorem:blindness_to_success}
Under Definition~\ref{def:blindness}'s conditions,
$\mathbb{P} \left( \lnot \mathcal{B}_{\alpha,\beta}^{m_0,n} \,\big|\, \pi_{m_0}\text{ has }\beta\text{-tail barrier} \right) \le ne^{-2N(\beta-\alpha)^2}$.
\end{theorem}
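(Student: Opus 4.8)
The plan is to decompose the failure event $\lnot \mathcal{B}_{\alpha,\beta}^{m_0,n}$ over the training steps, reduce each step to an i.i.d.\ sampling event under a \emph{frozen} policy $\pi_{m_0}$, and then control that event with Hoeffding's inequality. The starting point is to unpack what a $\beta$-tail barrier implies for the return distribution of $\pi_{m_0}$: since $q_{\alpha'}^{\pi_{m_0}} = q_\beta^{\pi_{m_0}} =: c$ for every $\alpha'\in[0,\beta]$, the quantile function is flat on $(0,\beta]$, which forces $F_{R|\pi_{m_0}}(r)=0$ for $r<c$ and $F_{R|\pi_{m_0}}(c)\ge\beta$. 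In words, $c$ is the \emph{minimal} attainable return and it carries an atom of probability at least $\beta$.

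First I would establish policy stagnation under blindness. Writing $A_m = \{\hat q_\alpha(\{R(\tau_{m,i})\}) \le c\}$, so that $\mathcal{B}_{\alpha,\beta}^{m_0,n} = \bigcap_{m_0 \le m < m_0+n} A_m$, I claim that on $A_{m_0}\cap\dots\cap A_{m-1}$ we have $\pi_m = \pi_{m_0}$. The argument is inductive: if $\pi_m = \pi_{m_0}$ then every sampled return satisfies $R(\tau_{m,i}) \ge c$, so $\hat q_\alpha \ge c$; combined with $A_m$ this gives $\hat q_\alpha = c$ exactly. Plugging $\hat q_\alpha = c$ into the gradient estimator~\eqref{eq:GCVaR_grad}, each trajectory either has $R(\tau_{m,i}) > c$ (its indicator vanishes) or $R(\tau_{m,i}) = c$ (the factor $R(\tau_{m,i})-\hat q_\alpha$ vanishes); hence $\nabla_\theta \hat J_\alpha = 0$ and $\pi_{m+1}=\pi_m=\pi_{m_0}$. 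This is the delicate link in the argument, because it is what lets me treat the step-$m$ trajectories as fresh i.i.d.\ draws from the fixed distribution $P^{\pi_{m_0}}$.

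Next I would apply the union bound in its ``first-failure'' form, which is an exact identity over the disjoint events: $\mathbb{P}(\lnot \mathcal{B}_{\alpha,\beta}^{m_0,n}) = \sum_{m=m_0}^{m_0+n-1} \mathbb{P}(A_{m_0}\cap\dots\cap A_{m-1}\cap \lnot A_m)$. By the stagnation claim, conditioning on the earlier $A$'s freezes the policy at $\pi_{m_0}$, and since the step-$m$ trajectories are independent of the earlier steps given the policies, $\mathbb{P}(\lnot A_m \mid A_{m_0}\cap\dots\cap A_{m-1}) \le \mathbb{P}\big(\hat q_\alpha(\{R(\tau_i)\}) > c\big)$ with $\tau_i \sim P^{\pi_{m_0}}$ i.i.d. Each summand is then at most this single sampling probability, leaving $n$ identical terms to bound.

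Finally I would reduce the per-step sampling probability to a Bernoulli concentration statement. Using $\hat q_\alpha > c \iff \hat F(c) < \alpha$ for the empirical CDF $\hat F$, and setting $X_i = \pmb{1}_{R(\tau_i)\le c}$ (i.i.d.\ Bernoulli with mean $p = F_{R|\pi_{m_0}}(c) \ge \beta$), the event becomes $\frac1N\sum_i X_i < \alpha$. Since $\alpha - p \le \alpha - \beta = -(\beta-\alpha)$, this is a lower-tail deviation of at least $\beta-\alpha$ below the mean, and the one-sided Hoeffding inequality gives $\mathbb{P}(\frac1N\sum_i X_i < \alpha) \le e^{-2N(\beta-\alpha)^2}$. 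Summing the $n$ terms yields $\mathbb{P}(\lnot \mathcal{B}_{\alpha,\beta}^{m_0,n}) \le n\,e^{-2N(\beta-\alpha)^2}$, as claimed. I expect the stagnation step to be the main obstacle, since it is the only place that uses the structure of the gradient estimator and the tail barrier simultaneously and is what couples the otherwise-dependent steps into a clean i.i.d.\ bound; the concentration step is routine once the Bernoulli variables are identified.
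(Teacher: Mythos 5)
Your proof is correct and follows essentially the same route as the paper's: the tail barrier forces $\hat q_\alpha = q_\beta^{\pi_{m_0}}$ and hence a vanishing gradient in Equation~\eqref{eq:GCVaR_grad}, so the policy stagnates at $\pi_{m_0}$, and each step's escape probability is bounded by Hoeffding applied to the Bernoulli indicators, giving $ne^{-2N(\beta-\alpha)^2}$ over $n$ steps. The only (cosmetic) differences are that you use the exact first-failure decomposition where the paper bounds $1-\prod_m \mathbb{P}(A_m\mid\text{past}) \le 1-(1-\delta)^n \le n\delta$, and you work with the lower-tail indicators $\pmb{1}_{R_i\le c}$ with mean $\ge\beta$ (slightly more careful than the paper's assertion that $\pmb{1}_{R_i > c}$ is exactly $Bernoulli(1-\beta)$).
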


\begin{proof}[Proof sketch (see the full proof in Appendix~\ref{sec:blindness_to_success})]
    In every step $m$, we have $q_\beta^{\pi_{m_0}} < \hat{q}_\alpha(\{R(\tau_{m,i})\})$ only if at least $1-\alpha$ of the returns are higher than $q_\beta^{\pi_{m_0}}$. We bound the probability of this event using the Hoeffding inequality (Lemma~\ref{lemma:good_behaviors}). In the complementary event the gradient is 0 (due to the barrier), thus the policy does not change, and the argument can be applied inductively to the next step.
\end{proof}

\subsection{Variance Reduction and Sample Efficiency}
\label{sec:variance_reduction}

As discussed in Section~\ref{sec:preliminaries}, an MDP can be often re-parameterized as a C-MDP.
In terms of the C-MDP, CVaR-PG samples $N$ context-trajectory pairs from the distribution $P^{\pi_\theta}_{\phi_0}(C,\tau)$, and calculates the policy gradients with respect to the $\alpha N$ trajectories with the lowest returns.
That is, CVaR-PG aims to follow the policy gradients corresponding to the tail distribution defined by
\begin{equation}
    \label{eq:P_alpha}
    P_{\phi_0,\alpha}^{\pi_\theta}(C,\tau) = \alpha^{-1} \pmb{1}_{R(\tau)\le q_\alpha(R|\pi_\theta)} P_{\phi_0}^{\pi_\theta}(C,\tau)
\end{equation}
Notice that by considering only $\alpha$ of the trajectories, CVaR-PG essentially suffers from $\alpha^{-1}$-reduction in sample efficiency in comparison to risk-neutral PG.

Proposition~\ref{prop:variance_reduction} shows that if we could sample trajectories directly from $P_{\phi_0,\alpha}^{\pi_\theta}$, we would reduce the variance of the policy gradient estimate (and thus increase the sample efficiency) back by a factor of $\alpha^{-1}$.
This will motivate the CEM in Section~\ref{sec:method}, which will aim to modify $\phi$ such that $P_{\phi}^{\pi_\theta} \approx P_{\phi_0,\alpha}^{\pi_\theta}$.


\begin{proposition}[Variance reduction]
\label{prop:variance_reduction}
If the quantile estimation error is negligible ($\hat{q}_\alpha = q_\alpha(R|\pi_\theta)$ in Equation~\eqref{eq:GCVaR_grad}), then
$$\mathrm{Var}_{\tau_{i}\sim P_{\phi_0,\alpha}^{\pi_\theta}}(\nabla_\theta \hat{J}_\alpha(\{\tau_{i}\}_{i=1}^{N}; \pi_\theta)) \le \alpha \cdot \mathrm{Var}_{\tau_{i}\sim P_{\phi_0}^{\pi_\theta}}(\nabla_\theta \hat{J}_\alpha(\{\tau_{i}\}_{i=1}^N;\, \pi_\theta)) .$$
\end{proposition}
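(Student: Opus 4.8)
The plan is to use that the tail distribution $P_{\phi_0,\alpha}^{\pi_\theta}$ is exactly the nominal distribution $P_{\phi_0}^{\pi_\theta}$ conditioned on the lower-tail event $\{R(\tau)\le q_\alpha\}$ (writing $q_\alpha := q_\alpha(R|\pi_\theta)$), so that the whole claim reduces to comparing the variance of a single summand under the two samplers. First I would abbreviate the per-trajectory score vector $g(\tau):=\left(R(\tau)-q_\alpha\right)\sum_{t=0}^T\nabla_\theta\log\pi_\theta(a_t;s_t)$ and note that, under $\hat q_\alpha=q_\alpha$, the estimator of Eq.~\eqref{eq:GCVaR_grad} is the mean of $N$ i.i.d.\ terms, namely $\frac{1}{\alpha N}\sum_i w_i\,\pmb{1}_{R(\tau_i)\le q_\alpha}\,g(\tau_i)$. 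Since the $\tau_i$ are i.i.d.\ in both regimes, the variance of the mean equals $N^{-1}$ times the per-term variance, so the $1/N$ appearing on each side of the proposition cancels and it suffices to prove the single-sample inequality.

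The crucial bookkeeping step is to read off the importance weight $w_i=P_{\phi_0}^{\pi_\theta}/f$ in each regime. Under nominal sampling $f=P_{\phi_0}^{\pi_\theta}$ gives $w_i=1$, so one summand equals $\alpha^{-1}B$ where $B:=\pmb{1}_{R\le q_\alpha}\,g(\tau)$ and $\tau\sim P_{\phi_0}^{\pi_\theta}$. Under tail sampling, Eq.~\eqref{eq:P_alpha} gives $w_i = P_{\phi_0}^{\pi_\theta}/P_{\phi_0,\alpha}^{\pi_\theta}=\alpha$ on the support (where the indicator is identically $1$), so the $\alpha^{-1}$ prefactor cancels and one summand is simply $g(\tau)$ with $\tau\sim P_{\phi_0,\alpha}^{\pi_\theta}$. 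Hence the per-term variances are $\alpha^{-2}\,\mathrm{Var}_{\mathrm{nom}}(B)$ and $\mathrm{Var}_{\mathrm{tail}}(g)$, and the proposition is equivalent to the single-sample bound $\alpha\,\mathrm{Var}_{\mathrm{tail}}(g)\le \mathrm{Var}_{\mathrm{nom}}(B)$.

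To finish I would exploit the idempotence $\pmb{1}_{R\le q_\alpha}^2=\pmb{1}_{R\le q_\alpha}$ together with $\mathbb{P}_{\mathrm{nom}}(R\le q_\alpha)=\alpha$, which yield $\mathbb{E}_{\mathrm{nom}}[B]=\alpha\,\mathbb{E}_{\mathrm{tail}}[g]$ and $\mathbb{E}_{\mathrm{nom}}[\|B\|^2]=\mathbb{E}_{\mathrm{nom}}[\pmb{1}_{R\le q_\alpha}\|g\|^2]=\alpha\,\mathbb{E}_{\mathrm{tail}}[\|g\|^2]$ (reading the vector variance as the trace of the covariance). Substituting into $\mathrm{Var}_{\mathrm{nom}}(B)-\alpha\,\mathrm{Var}_{\mathrm{tail}}(g)$, the second-moment terms cancel and the remainder is the mean-correction $\alpha(1-\alpha)\,\|\mathbb{E}_{\mathrm{tail}}[g]\|^2\ge 0$, which is exactly the desired bound (strict unless the tail gradient is mean-zero or $\alpha\in\{0,1\}$). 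Read instead as a Loewner inequality on the full covariance, the identical computation produces the gap $\alpha(1-\alpha)\,\mathbb{E}_{\mathrm{tail}}[g]\,\mathbb{E}_{\mathrm{tail}}[g]^\top\succeq0$.

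The main obstacle is the importance-sampling accounting of the second paragraph: one must notice that switching to the tail sampler turns $w_i$ into the constant $\alpha$ and thereby cancels the $\alpha^{-1}$ in Eq.~\eqref{eq:GCVaR_grad}, since this is precisely where the factor $\alpha$ in the bound is born; without it the two estimators look identical and the improvement disappears. A secondary care point is the quantile-mass identity $\mathbb{P}(R\le q_\alpha)=\alpha$, which is exact for a continuous return but only $\ge\alpha$ when $R$ has an atom at $q_\alpha$ (the tail-barrier situation of Section~\ref{sec:analysis_blindness}); under the proposition's negligible-quantile-error hypothesis the equality version is the intended reading, so the clean cancellation goes through.
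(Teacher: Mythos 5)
Your proposal is correct and follows essentially the same route as the paper's proof in Appendix~\ref{sec:variance_reduction_proof}: the per-sample i.i.d.\ reduction, the observation that the importance weight becomes the constant $w\equiv\alpha$ under tail sampling, and the change-of-measure identity relating second moments under $P_{\phi_0,\alpha}^{\pi_\theta}$ and $P_{\phi_0}^{\pi_\theta}$ are exactly the paper's steps, with your exact gap $\alpha(1-\alpha)\,\|\mathbb{E}_{\mathrm{tail}}[g]\|^2$ being the same slack the paper obtains from bounding $-g^2 \le -\alpha g^2$. Your closing remark about the atom at $q_\alpha$ (where $\mathbb{P}(R\le q_\alpha)$ may exceed $\alpha$) is a legitimate care point that the paper handles only implicitly via its continuity assumptions.
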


\begin{proof}[Proof sketch (see the full proof in Appendix~\ref{sec:variance_reduction_proof})]
Since the left term corresponds to the sample distribution $P_{\phi_0,\alpha}^{\pi_\theta}$, the corresponding IS weights are $w\equiv\alpha$ w.p.~1.
When applying IS analysis to the expected value, $w$ cancels out the distributional shift (as in Equation~\ref{eq:CEM0}), resulting in the same expected gradient estimate.
When applying the same analysis to the variance, we begin with the square weight $w^2$, thus a $w=\alpha$ factor still remains after the distributional shift compensation.
\end{proof}


The variance reduction can be connected to sample efficiency through the convergence rate as follows.
According to Theorem 5.5 in \citet{xu2020improved}, denoting the initial parameters by $\theta_0$, the convergence of any CVaR-PG algorithm can be written as 
$\mathbb E[\|\nabla_\theta  J_\alpha(\pi_\theta)\|^2]\leq \mathcal O(\frac{J_\alpha(\theta)-J_\alpha(\theta_0)}{ M}) + \mathcal O(\frac{\mathrm{Var}(\nabla_\theta \hat{J}_\alpha(\{\tau_{i}\}_{i=1}^{N}; \pi_\theta))}{\alpha N})$.
Clearly, variance reduction of $\alpha$-factor linearly improves the second term. In particular, it cancels out the denominator's $\alpha$-factor attributed to tail sub-sampling, and brings the sample efficiency back to the level of the risk-neutral PG.

\section{The Cross-entropy Soft-Risk Algorithm}
\label{sec:method}

Algorithm~\ref{algo:cesor} presents our Cross-entropy Soft-Risk algorithm (\textbf{\textit{CeSoR}}), which uses a PG approach to maximize $J_\alpha(\pi_\theta)$ in~\eqref{eq:cvar}.
CeSoR adds two components on top of CVaR-PG: {\em soft-risk scheduling} to address the blindness to success analyzed in Section~\ref{sec:analysis_blindness}, and {\em CE sampling} to address the sample efficiency analyzed in Section~\ref{sec:variance_reduction}.

\begin{wrapfigure}[27]{i}{0.56\textwidth}
\vspace{-12pt}
\hspace{5pt}
\begin{minipage}{\linewidth}
\begin{algorithm}[H]
\caption{CeSoR}
\label{algo:cesor}
\DontPrintSemicolon
\SetAlgoNoLine
\SetNoFillComment

{\bf Input}: risk level $\alpha$; context distribution $D_{\phi}$; original context parameter $\phi_0$; training steps $M$; trajectories sampled per batch $N$, where $\nu$ fraction of them is from the original $D_{\phi_0}$; smoothed CE quantile $\beta$; risk-level scheduling factor $\rho$\;
 \BlankLine
 {\bf Initialize:} $\;$ policy $\pi_\theta$, $\quad\phi\leftarrow {\phi_0}$,\;
 $N_o\leftarrow \lfloor \nu N \rfloor, \quad $
 $N_s\leftarrow \lceil (1-\nu) N \rceil$\;
 \BlankLine
 \For{$m$ in $1:M$}{
	\tcp{Sample contexts}
	Sample $\{C_{o,i}\}_{i=1}^{N_o} \sim D_{{\phi_0}},\quad\{C_{\phi,i}\}_{i=1}^{N_s} \sim D_{\phi}$\; \label{line:sample}
    $C \leftarrow (C_{o,1},\ldots, C_{o,N_o}, C_{\phi,1},\ldots, C_{\phi,N_s})$\;
    $w_{o,i} \leftarrow 1, \; \forall i\in\{1,\ldots,N_o\}$\; 
    $w_{\phi,i} \leftarrow \frac{D_{\phi_0}(C_{\phi,i})}{ D_{\phi}(C_{\phi,i})}, \; \forall i\in \{1,\ldots,N_s\}$\; \label{line:w_s}
    $w \leftarrow (w_{o,1},\ldots, w_{o,N_o}, w_{\phi,1},\ldots, w_{\phi,N_s})$\; \label{line:w_all}
	\tcp{Sample trajectories}
	$\{\tau_{C_{o,i}}\}, \{\tau_{C_{\phi,i}}\} \leftarrow \text{run\_episodes}(\pi_\theta,\, C)$\; \label{line:run}
	\tcp{Update CE sampler}
	$q \leftarrow \max (\hat{q}_\alpha(\{R(\tau_{C_{o,i}})\}), \hat{q}_{\beta}(\{R(\tau_{C_{\cdot,i}})\}))$\; \label{line:qref}
    $\phi \leftarrow \arg\!\max_{\phi^\prime}\!
    \sum_{i\leq N} \! w_i \;\pmb{1}_{R(\tau_{C_i})\le q} \log\! D_{\phi^\prime}(C_i)$\;
    \label{line:update}
	\tcp{PG step (e.g., Eq.~\ref{eq:GCVaR_grad_is})}
	$\alpha^\prime \leftarrow \max( \alpha,\, 1 - (1-\alpha)\cdot m/(\rho \cdot M) )$\; \label{line:alpha_prime}
	$q^\prime \leftarrow \hat{q}_{\alpha^\prime}(\{R(\tau_{C_{o,i}})\})$\;
	$\theta \leftarrow \text{CVaR\_PG}(\pi_\theta,\, (\{\tau_{C_{o,i}}\}, \{\tau_{C_{\phi,i}}\}),\, w,\, q^\prime)$ \label{line:gcvar}
 }
\end{algorithm}
\end{minipage}
\end{wrapfigure}

\textbf{Soft-risk scheduler}:
We set the policy optimizer (Line~\ref{line:gcvar} in Algorithm~\ref{algo:cesor}) to use a soft risk level $\alpha^\prime$ that gradually decreases from $1$ to $\alpha$ (Line~\ref{line:alpha_prime} and Figure~\ref{fig:soft_risk}).
This is motivated by the blindness to success analyzed in Section~\ref{sec:analysis_blindness}:
by modifying the risk level to $\alpha'> \alpha$, and specifically $\alpha'\approx1$ at the beginning of training, we guarantee that there cannot be a wider tail barrier $\beta > \alpha'$.
Thus, CeSoR can feed the optimizer with trajectories whose returns $q_{\beta}^\pi < R \le q_{\alpha'}^\pi$ are higher than any constant tail; and since the fed returns are not constant, they do not eliminate the gradient. In this sense, CeSoR looks beyond local optimization-plateaus to prevent the blindness to success.



The scheduling defined in Line~\ref{line:alpha_prime} and Figure~\ref{fig:soft_risk} is heuristic.
As demonstrated in Section~\ref{sec:experiments}, once we understand the limitation of blindness to success, this simple heuristic is sufficient to bypass the blindness.
An adaptive $\alpha'$ scheduling that maximizes blindness prevention probability would require tighter concentration inequalities \citep{boucheron2013concentration}, and is left for future work.


\textbf{Cross Entropy Method (CEM)}:
The CEM~\citep{CE_tutorial} is a general approach to rare-event sampling and optimization, which we use to sample high-risk contexts and trajectories.
First, we review the standard CEM in terms adjusted to our setting and notations (for a more general presentation, see Algorithm~\ref{algo:CEM} in the appendix).
Then, we discuss the limitations of the standard CEM in the RL settings, and present our dynamic, regularized version of the CEM.

Motivated by the sample efficiency analysis of Section~\ref{sec:variance_reduction}, we wish to align the agent's experience with the $\alpha$ worst-case returns -- by sampling contexts whose corresponding trajectory-returns are likely to be below $q_\alpha(R|\pi_\theta)$.
That is, we wish to sample context-trajectory pairs from $P_{\phi_0,\alpha}^{\pi_\theta}$ of \eqref{eq:P_alpha}.
To that end, the CEM searches for a value of $\phi$ for which $P_{\phi}^{\pi_\theta}$ is similar to $P_{\phi_0,\alpha}^{\pi_\theta}$.
More precisely, it looks for $\phi^*$ that minimizes the KL-divergence (i.e., cross-entropy) between the two:
%
%
\begin{align}
\label{eq:CEM0}
\begin{split}
\phi^* &\in \mathrm{argmin}_{\phi'} \; D_{KL}\big(P_{\phi_0,\alpha}^{\pi_\theta}(C,\tau) \,||\, P_{\phi'}^{\pi_\theta}(C,\tau)\big) \\ 
&= \ \mathrm{argmax}_{\phi'} \; \mathbb{E}_{(C,\tau)\sim P_{\phi_0}^{\pi_\theta}} \big[\alpha^{-1} \pmb{1}_{R(\tau)\le q_\alpha(R|\pi_\theta)} \log D_{\phi'}(C)\big] \\
&= \ \mathrm{argmax}_{\phi'} \; \mathbb{E}_{(C,\tau)\sim P_{\phi}^{\pi_\theta}} \big[\alpha^{-1} w(C,\tau)\;\pmb{1}_{R(\tau)\le q_\alpha(R|\pi_\theta)} \log D_{\phi'}(C)\big],
\end{split}
\end{align}
%
where $P^{\pi_\theta}_{\phi'}(C,\tau) = D_{\phi'}(C)P_C^{\pi_\theta}(\tau)$ (Section~\ref{sec:preliminaries}), and $w(C,\tau)=\frac{P^{\pi_\theta}_{\phi_0}(C,\tau)}{P^{\pi_\theta}_{\phi}(C,\tau)} = \frac{D_{\phi_0}(C)}{D_{\phi}(C)}$ is the IS weight corresponding to the sample distribution $(C,\tau) \sim P_{\phi}^{\pi_\theta}$.
The optimization problem in Equation~\eqref{eq:CEM0} often reduces to a simple closed-form calculation: if $D_\phi$ is a Gaussian, for example, $\phi^*$ reduces to the weighted expectation and variance of $\{C \,|\, R(\tau) \le q_\alpha\}_{C,\tau \sim P^{\pi_\theta}_{\phi}}$ with the IS weights $w(C,\tau)$.

Equation~\eqref{eq:CEM0} may produce noisy results when estimated from data $\{(C_i,\tau_i)\}_{i=1}^N$, unless $N \gg \alpha^{-1}$, since only $\alpha N$ trajectory-samples satisfy $R(\tau) \le q_\alpha$ and are used in the estimation.
To address this, the CEM reaches the $\alpha$-tail gradually over iterations.
Every iteration, it samples a batch of contexts $\{C_{i}\}_{i=1}^{N}$ from the current distribution $D_\phi$, and then solves Equation~\eqref{eq:CEM0} with respect to a \textit{higher} quantile $q\ge q_\alpha$.
More specifically, denote by $\hat{q}_\alpha^{\phi}$ the estimated $\alpha$-quantile of $\{R(\tau)\}_{C,\tau \sim P^{\pi_\theta}_{\phi}}$; then, we set $q=\max(\hat{q}_\alpha^{\phi_0},\, \hat{q}_\beta^{\phi})$ with a hyperparameter $\beta>\alpha$ (often $\beta=0.2$).
Since the data is drawn from $P^{\pi_\theta}_{\phi}$, this guarantees at least $\beta N$ samples per update step.
The quantile $\hat{q}_\alpha^{\phi_0}$ corresponds to the $\alpha$-tail of the original context-distribution, and can be viewed as a stopping condition: once $\hat{q}_\alpha^{\phi_0} > \hat{q}_\beta^{\phi}$, many of our samples are already in the tail, and $\beta$ is no longer needed to smooth the update of $\phi$.


\begin{figure}[!t]
\centering
\begin{minipage}{0.35\textwidth}
    \centering
    \includegraphics[width=1.\linewidth]{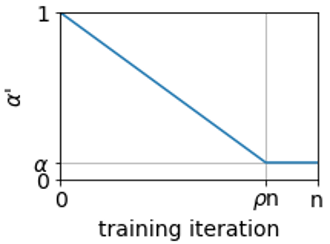}
    \caption{\small The soft-risk scheduling (Algorithm~\ref{algo:cesor}, Line~\ref{line:alpha_prime}). The linear phase $\alpha'>\alpha$ prevents the blindness to success (Section~\ref{sec:analysis_blindness}), while the CEM still preserves risk aversion. The final constant phase $\alpha'=\alpha$ provides a stationary objective and allows CeSoR to converge (Appendix~\ref{sec:analysis_gradient}).}
    \label{fig:soft_risk}
\end{minipage}%
\hspace{0.05\textwidth}
\begin{minipage}{0.59\textwidth}
    \centering
    \includegraphics[width=1.\linewidth]{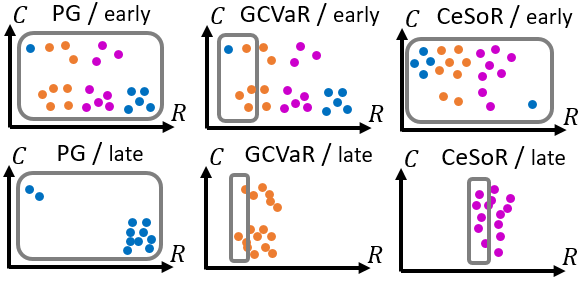}
    \caption{\small An illustration of training batches. Each point represents an episode with return $R$ and context $C$. Points of the same color correspond to “similar” agent actions that induce similar policy gradients.
    Mean-PG averages over the \textit{whole} batch and learns the blue strategy. CVaR-PG considers the \textit{left} part (low returns) and learns the orange strategy. CeSoR over-samples the \textit{upper} part (high-risk contexts), and only later decreases $\alpha^\prime$ to explicitly focus on low returns, thus learning the purple strategy. 
    The illustrated episodes are analogous to the strategies in Figures~\ref{fig:maze_nice},\ref{fig:maze_exposure}.}
    \label{fig:cesor_illustration}
\end{minipage}
\end{figure}


\textbf{Dynamic-target CEM}:
The standard CEM assumes to search for the tail of a \textit{constant} distribution.
In our setting, however, we look for the tail of the distribution of the returns $R(\tau)$, where $C,\tau \sim P^{\pi_\theta}_{\phi_0}$ depend on $\pi_\theta$ and thus are non-stationary throughout the training.
The non-stationarity poses several challenges for the CEM.
First, the stopping condition $\hat{q}_\alpha^{\phi_0}$ varies with $\pi_\theta$ and has to be re-estimated every iteration\footnote{For the sake of coherent notations, we presented the CEM with the quantile objective $q_\alpha(R|\pi_\theta)$. In fact, the standard CEM is usually defined with a constant numeric objective $q_0\in\mathbb{R}$ rather than a quantile; hence, as shown in Algorithm~\ref{algo:CEM} in the appendix, the standard CEM does not require any quantile estimation at all.}.
Second, the high-risk contexts $C$ (which correspond to the lowest returns) may vary as the agent evolves; and if the CEM learns to only sample a strict subset of the context space, then it may miss such changes in the high-risk contexts.

We address both issues using reference samples: every iteration, we sample \textit{two} batches of contexts -- $\{C_{\phi,i}\}_{i=1}^{N_s}$ from the current context distribution $D_\phi$ and $\{C_{o,i}\}_{i=1}^{N_o}$ from the original distribution $D_{\phi_0}$.
The reference contexts provide an important regularization: they guarantee continual exposure to the whole context space, in case that the high-risk contexts vary.
In addition, the reference samples were empirically found to stabilize the estimation of $\hat{q}_\alpha^{\phi_0}$ (Line~\ref{line:qref} in Algorithm~\ref{algo:cesor}).

Consider the two batches of context-trajectory pairs, and denote the estimated return quantile $\hat{q}_\alpha=\hat{q}_\alpha(\{R(\tau_i)\}_{i=1}^{N_o})$.
We can estimate the CVaR policy gradient, using the notation
$\forall 1\le i\le N_o+N_s:\ C_i = \begin{cases} C_{o,i} & \text{if } 1\le i\le N_o \\ C_{\phi,i-N_o} & \text{if } N_o+1\le i\le N_o+N_s \end{cases}$, by
\begin{align}
\label{eq:GCVaR_grad_is}
\nabla_\theta \hat{J}_\alpha(\pi_\theta) = \frac{1}{\alpha (N_o+N_s)} \sum_{i=1}^{N_o+N_s} w_i\cdot \pmb{1}_{R(\tau_i)\le \hat{q}_\alpha} \left(R(\tau_i) - \hat{q}_\alpha\right) \sum_{t=0}^T\!\nabla_\theta\log \pi_\theta(a_{i,t};s_{i,t}),
\end{align}
where $w_i=1$ for $1\leq i \leq N_o$ and $w_i=D_{\phi_0}(C_i)/D_{\phi^*}(C_i)$ for $N_o+1\leq i\leq N_o+N_s$.

Note that if the policy learning scale is slower than that of $\phi$, the target context distribution $P^{\pi_\theta}_{\phi_0,\alpha}$ is effectively stationary in the $\phi$-optimization problem. In that case, according to \citet{RareEE}, the CEM will converge to the KL-divergence minimizer $\phi^*$ of~\eqref{eq:CEM0}.

\textbf{Sample efficiency in practice}:
Proposition~\ref{prop:variance_reduction} guarantees an $\alpha^{-1}$-increase in sample efficiency when using an accurate quantile estimate $\hat{q}_\alpha = q_\alpha(R|\pi_\theta)$ and sampling exactly from $P_{\phi_0,\alpha}^{\pi_\theta}$.
The latter condition is equivalent to the CE-sampler reaching its objective $D_{KL}( P_{\phi_0,\alpha}^{\pi_\theta} \,||\, P_{\phi}^{\pi_\theta}) = 0$.
In practice, $P_{\phi_0,\alpha}^{\pi_\theta}$ can only be approximated, and the sample efficiency is increased -- but by a smaller factor than $\alpha^{-1}$.
Appendix~\ref{sec:ce_sample_efficiency} demonstrates the increased sample size exploited by CeSoR in our experiments.

If $\hat{q}_\alpha \ne q_\alpha(R|\pi_\theta)$, the quantile estimation error may theoretically lead to unbounded IS weights (see Appendix~\ref{sec:variance_reduction_proof}). Practically, we address this by clipping the weights (as mentioned in Section~\ref{sec:experiments}), and by constraining the family of permitted distributions $\{D_\phi\}_\phi$ to have a constant support independently of $\phi$.
A side-effect is a function approximation error of the family $\{D_\phi\}_\phi$, as $D_{\phi^*}(C)P^{\pi_\theta}_{C}(\tau)$ cannot replicate the tail distribution $P^{\pi_\theta}_{\phi_0,\alpha}(C,\tau)$ to achieve the full $\alpha^{-1}$-increase in sample efficiency.

Another limitation in the expressiveness of $P^{\pi_\theta}_{\phi}(C,\tau) = D_{\phi}(C)P^{\pi_\theta}_{C}(\tau)$ occurs when the context $C$ only controls part of the environment randomness in $P^{\pi_\theta}_{C}(\tau)$.
As an extreme example in the Guarded Maze, after $\pi_\theta$ already learns to avoid the short path, the context (guard cost) does not affect the outcome at all anymore.
Indeed, Figure~\ref{fig:ce_maze_dist} in the appendix shows that high guard costs are sampled in the beginning; then, once the short path is avoided, the sampler gradually falls back to the original context distribution.
Note that in this example, the invariance to $C$ began after the learning was essentially done, hence the CEM did play its part effectively.

Finally, note that the soft risk creates an intentional bias in the gradient estimate (to overcome the blindness to success).
As a result, in the first phase of training ($\alpha'\gg\alpha$), only a few trajectories are overlooked every iteration.
As $\alpha'$ approaches $\alpha$, the number of overlooked trajectories increases, and so is the importance of over-sampling the tail.
In the final steady-state phase ($\alpha'=\alpha$), the sample inefficiency is most severe, the soft risk produces no further biases, and the CEM helps CeSoR to reduce the high variance in the policy gradient estimation.

\textbf{The harmony between the soft risk and the CEM}:
Soft risk has the inherent side effect of reducing the risk aversion.
In the Guarded Maze, for example, as demonstrated in Section~\ref{sec:maze}, soft risk alone leads to learning the short path (instead of the risk-averse long path).
Fortunately, the CEM reduces this side effect.
In that sense, the two mechanisms complement each other: $\alpha'>\alpha$ allows the \textit{optimizer} to learn policies with high returns, while the CE \textit{sampler} still preserves the risk aversion -- as illustrated in Figure~\ref{fig:cesor_illustration}.
This connection stands in addition to the independent motivations of the two mechanisms, as discussed above.

\textbf{Baseline optimizer}:
CeSoR can be implemented on top of any CVaR-PG method as a baseline (Line~\ref{line:gcvar}).
We use the standard GCVaR~\citep{cvar_via_sampling}, which guarantees asymptotic convergence under certain regularity conditions. Appendix~\ref{sec:analysis_gradient} shows that these guarantees hold for CeSoR as well, when implemented on top of GCVaR.
Other CVaR-PG baselines can also be used, such as the TRPO-based algorithm of \citet{cvar_trpo}. However, such methods often include heuristics that introduce additional gradient estimation bias (to reduce variance), and thus do not necessarily guarantee the same theoretical convergence.




\section{Experiments}
\label{sec:experiments}

We conduct experiments in 3 different domains.
We implement \textbf{CeSoR} on top of a standard CVaR-PG method, which is also used as a risk-averse baseline for comparison. Specifically, we use the standard \textbf{GCVaR} baseline~\citep{cvar_via_sampling}, which guarantees convenient convergence properties (see Appendix~\ref{sec:analysis_gradient}) and is simple to implement and analyze.
We also use the standard policy gradient (\textbf{PG}) as a risk-neutral baseline. We stress that the comparison to PG is only intended to present the mean-CVaR tradeoff, while each method legitimately optimizes its own objective. 
Appendix~\ref{sec:dRL} also compares CeSoR to risk-neutral and risk-averse Distributional RL algorithms.



In all the experiments, all agents are trained using Adam~\citep{adam}, with a learning rate selected manually per benchmark and $N=400$ episodes per training step.
Every 10 steps we run validation episodes, and we choose the final policy according to the best validation score (best mean for PG, best CVaR for GCVaR and CeSoR).
For CeSoR, unless specified otherwise, 
$\nu=20\%$ of the trajectories per batch are drawn from the original distribution $D_{\phi_0}$; $\beta=20\%$ are used for the CE update; and 
the soft risk level reaches $\alpha$ after $\rho=80\%$ of the training.
As mentioned in Section~\ref{sec:method}, for numerical stability, we also clip the IS weights (Algorithm~\ref{algo:cesor}, Line~\ref{line:w_all}) to the range $[1/5,5]$.

Every policy is modeled as a neural network with $tanh$ activation on its middle layers and $softmax$ operator on its output, with temperature $1$ in training (i.e., network outputs are actions probabilities), and $0$ in validation and test (i.e., the max output is the selected action).
We use a middle layer with 32 neurons in Section~\ref{sec:driver}, 16 neurons in Section~\ref{sec:servers}, and no middle layer (linear model) in Section~\ref{sec:maze}.

In each of the 3 domains, the experiments required a running time of a few hours on an Ubuntu machine with eight i9-10900X CPU cores.
In addition to these RL-related experiments, Appendix~\ref{sec:CE_results} presents dedicated experiments for the independent CE module.


\begin{wrapfigure}[18]{R}{0.47\textwidth}
\vspace{-43pt}
\centering
  \includegraphics[width=1.\linewidth]{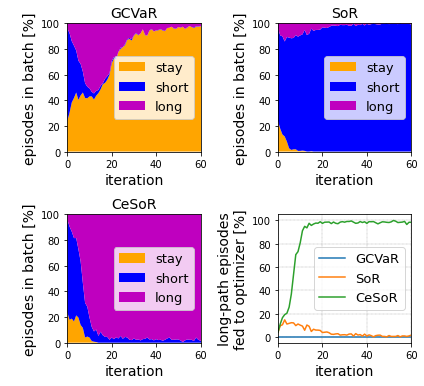}
\caption{\small GCVaR, SoR, CeSoR: $\%$-episodes that did not reach the target ("stay"), or reached it through the short or the long path in the Guarded Maze. Bottom Right: $\%$-long-paths among the trajectories fed to the optimizer. See more details in Figure~\ref{fig:blindness_to_success}.}
\label{fig:maze_exposure}
\end{wrapfigure}

\subsection{The Guarded Maze}
\label{sec:maze}

\textbf{Benchmark:}
The Guarded Maze benchmark is defined in Section~\ref{sec:intro}.
For the experiments, we set a target risk level of $\alpha=0.05$, and train each agent for $n=250$ steps with the parameters described above.
The CEM controls $C$ through $\phi=(\phi_1,\phi_2)$, where $\phi_0=(0.2,32)$ as mentioned above, and updates $\phi_1,\phi_2$ using the weighted means of $C_1$ and $C_2$, respectively.
As an ablation test, we add two partial variants of our CeSoR: \textbf{CeR} (with CE, without $\alpha$-scheduling) and \textbf{SoR} (with scheduling, without CE).
See more details in Appendix~\ref{sec:maze_implementation}.

\textbf{Results:}
Figure~\ref{fig:maze_test_scores} summarizes the test scores, and Figure~\ref{fig:maze_nice} illustrates a sample episode.
PG learned the short path, maximizing the average but at the cost of poor returns whenever charged by the guard.
CeSoR, on the other hand, successfully learned to follow the CVaR-optimal long path.
GCVaR, which also aimed to maximize the CVaR, failed to do so. As analyzed in Figure~\ref{fig:maze_exposure}, throughout GCVaR training, the agent takes the long path in up to 50\% of the episodes per batch, but none of these episodes is ever included in the bottom $\alpha=5\%$ that are fed to the optimizer. 
Thus, GCVaR is entirely \textit{blind} to the successful episodes and fails to learn the corresponding strategy.
In fact, in most training steps, \textit{all} the worst episodes of GCVaR reach neither the guard nor the target, leading to a constant return of $-32$, a tail barrier, and a zero loss-gradient. 

CeR suffers from blindness to success just as GCVaR.
SoR is exposed to the successful long-path episodes thanks to soft risk scheduling; however, due to the reduced risk-aversion, it fails to prefer the long path over the short one.
Only CeSoR both observes the \textit{"good" strategy} (thanks to soft risk scheduling) and judges it under \textit{"bad" environment variations} (thanks to the CEM).
Appendix~\ref{sec:maze_detailed_results} presents a detailed analysis of the learning dynamics, the blindness to success and the learned policies.
It is important to notice that standard optimization tweaks cannot bring GCVaR to learn the long path: a "warm-start" from a standard PG only encourages the short-path policy (as in SoR); and increased batch size $N$ does not expose the optimizer to the long path (see Theorem~\ref{theorem:blindness_to_success}).


\subsection{The Driving Game}
\label{sec:driver}

\textbf{Benchmark:}
The Driving Game is based on an inverse-RL benchmark used by \citet{driving2017} and \citet{driving2018}.
The agent's car has to follow the leader (an "erratic driver") for 30 seconds as closely as possible without colliding.
Every 1.5 seconds (i.e., 20 times per episode), the leader chooses a random action (independently of the agent): drive straight, accelerate, decelerate, change lane, or brake hard ("emergency brake"), with respective probabilities $\phi_0=(0.35,0.3,0.248,0.1,0.002)$. We denote the sequence of leader actions by $C\in\{1,...,5\}^{20}$.

\begin{wrapfigure}[27]{R}{0.4\textwidth}
\vspace{-13pt}
\centering
\begin{subfigure}{1.\linewidth}
  \centering
  \includegraphics[width=1.\linewidth]{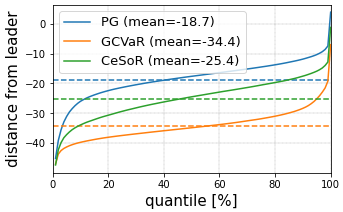}
  \caption{}
  \label{fig:driving_actions}
\end{subfigure}
\begin{subfigure}{.4\textwidth}
  \centering
  \includegraphics[width=1.\linewidth]{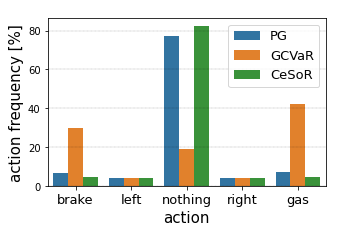}
  \caption{}
  \label{fig:driving_dx}
\end{subfigure}
\caption{\small Over all the time-steps in all the test episodes in the Driving Game, the distribution of (a) the distance between the agent and the leader, (b) the agent actions. Evidently, CeSoR learns to keep more distance than the risk-neutral PG, and has a slightly less frequent use of the gas and the brake.}
\label{fig:driving_behavior}
\end{wrapfigure}

Every 0.5 seconds (60 times per episode), the agent observes its relative position and velocity to the leader, with a delay of 0.7 seconds (representing reaction time), as well as its own acceleration and steering direction. The agent chooses one of the five actions: drive in the same steering direction, accelerate, decelerate, turn left, or turn right. Changing lane is not an atomic action and has to be learned using turns.
The rewards express the requirements to stay behind the leader, on the road, on the same lane, not too far behind and without colliding.
See the complete details in Appendix~\ref{sec:driver_implementation}.

We set $\alpha=0.01$, and train each agent for $n=500$ steps.
To initiate learning, for each agent we begin with shorter training episodes of 6 seconds and gradually increase their length. 
The CEM controls the leader's behavior through the probabilities $\phi=\{\phi_i\}_{i=1}^5$ described above.


\textbf{Results:}
Figure~\ref{fig:driving_test_scores} summarizes the test scores of the agents, where CeSoR presents a reduction of 28\% in the CVaR cost in comparison to the baselines. GCVaR completely fails to learn a reasonable policy -- losing in terms of CVaR even to the risk-neutral PG.
Figure~\ref{fig:driving_behavior} shows that CeSoR learned an arguably-intuitive policy for risk averse driving: it keeps a safer distance from the leader, and uses the gas and the brake less frequently.
This results in complete avoidance of the rare accidents occurring to PG, as demonstrated in Figure~\ref{fig:driving_nice}.
In Appendix~\ref{sec:CE_results}, we also see that by over-sampling turns and emergency brakes of the leader, the CEM manages to align the mean return of the training samples with the 1\%-CVaR of the environment, and significantly increases the data efficiency.


\subsection{Computational Resource Allocation}
\label{sec:servers}

\textbf{Benchmark:}
Computational resource allocation in serving systems, and in particular the tradeoff between resource cost and serving latency, is an important challenge to both academia~\citep{cloud_resource_scaling,chen_datacenters} and industry~\citep{aws_predictive_scaling,granulate_techcrunch2}.
In popular applications such as E-commerce and news, latency is most critical at times of peak loads~\citep{peak_loads}, making CVaR a natural metric for risk-averse optimization.
In our benchmark, the agent allocates servers to handle user requests, managing the tradeoff between servers cost and time-to-service (TTS).
Requests arrive randomly with a constant rate, up to rare events that cause sudden peak loads, whose frequency is controlled by the CE sampler.
See Appendix~\ref{sec:servers_detailed} for more details.

\textbf{Results:} As shown in Figure~\ref{fig:servers_test_scores}, CeSoR significantly improves the CVaR return, and does not compromise the mean as much as GCVaR.
As demonstrated in Figure~\ref{fig:servers_nice}, CeSoR learned to allocate a default of 5 servers and react to peak loads as needed, whereas GCVaR simply allocates 8 servers at all times.
PG only allocates 4 servers by default, and thus its TTS is more sensitive to peak loads.
Appendix~\ref{sec:servers_detailed} describes the complete implementation and detailed results, discusses the poor parameterization of $D_\phi$ in this problem and shows the robustness of CeSoR to that parameterization.

\section{Summary and Future Work}
\label{sec:summary}

We introduced CeSoR, a novel method for risk-averse RL, focused on efficient sampling and soft risk.
In a variety of experimental domains, in comparison to a risk-averse baseline, CeSoR demonstrated higher CVaR metric, better sample-efficiency, and elimination of blindness to success -- where the latter two were also analyzed theoretically.

There are certain limitations to CeSoR.
First, we assume to have at least partial control over the training conditions, through a parametric family of distributions that needs to be selected.
Second, CeSoR can be applied robustly on top of any CVaR-PG method, but is currently not applicable to non-PG methods. Since the limitations of CVaR-PG apply in other risk-averse methods as well (as we demonstrated for Distributional RL), future work may adjust CeSoR to such methods, as well as to other risk measures.
Third, in terms of blindness to success and estimation variance, CeSoR shows both theoretical and empirical improvement -- but is not proven optimal. Future work may look for optimal design of CEM or risk scheduling.
Considering the current results and the potential extensions, we believe CeSoR may open the door for more practical applications of risk-averse RL.


\textbf{Acknowledgements:}
This research was supported by the Israel Science Foundation (grant 2199/20).



\bibliographystyle{plainnat}
\bibliography{main}

\begin{thebibliography}{52}
\providecommand{\natexlab}[1]{#1}
\providecommand{\url}[1]{\texttt{#1}}
\expandafter\ifx\csname urlstyle\endcsname\relax
  \providecommand{\doi}[1]{doi: #1}\else
  \providecommand{\doi}{doi: \begingroup \urlstyle{rm}\Url}\fi

\bibitem[Barr(2018)]{aws_predictive_scaling}
Jeff Barr.
\newblock Predictive scaling for {EC2}, powered by machine learning, 2018.
\newblock URL
  \url{https://aws.amazon.com/blogs/aws/new-predictive-scaling-for-ec2-powered-by-machine-learning/}.

\bibitem[Bellemare et~al.(2017)Bellemare, Dabney, and Munos]{dRL}
Marc~G Bellemare, Will Dabney, and R{\'e}mi Munos.
\newblock A distributional perspective on reinforcement learning.
\newblock In \emph{International Conference on Machine Learning}, pages
  449--458. PMLR, 2017.

\bibitem[Bodnar et~al.(2020)Bodnar, Li, Hausman, Pastor, and
  Kalakrishnan]{qt_opt}
Cristian Bodnar, Adrian Li, Karol Hausman, Peter Pastor, and Mrinal
  Kalakrishnan.
\newblock Quantile {QT-Opt} for risk-aware vision-based robotic grasping.
\newblock In \emph{Proceedings of Robotics: Science and Systems}, Corvalis,
  Oregon, USA, July 2020.
\newblock \doi{10.15607/RSS.2020.XVI.075}.

\bibitem[Borkar and Meyn(2002)]{Borkar2002a}
V.~S. Borkar and S.~P. Meyn.
\newblock Risk-sensitive optimal control for {{Markov}} decision processes with
  monotone cost.
\newblock \emph{Mathematics of Operations Research}, 27\penalty0 (1):\penalty0
  192--209, 2002.

\bibitem[Borkar and Jain(2014)]{Borkar2014}
Vivek Borkar and Rahul Jain.
\newblock Risk-constrained {Markov} decision processes.
\newblock \emph{IEEE Transactions on Automatic Control}, 2014.

\bibitem[Boucheron et~al.(2013)Boucheron, Lugosi, and
  Massart]{boucheron2013concentration}
St{\'e}phane Boucheron, G{\'a}bor Lugosi, and Pascal Massart.
\newblock \emph{Concentration inequalities: A nonasymptotic theory of
  independence}.
\newblock Oxford university press, 2013.

\bibitem[Chow and Ghavamzadeh(2014)]{mean_cvar_pg}
Y.~Chow and M.~Ghavamzadeh.
\newblock Algorithms for {CV}a{R} optimization in {MDP}s.
\newblock In \emph{Proceedings of Advances in Neural Information Processing
  Systems 27}, pages 3509--3517, 2014.

\bibitem[Chow et~al.(2015)Chow, Tamar, Mannor, and Pavone]{cvar_vs_robustness}
Y.~Chow, A.~Tamar, S.~Mannor, and M.~Pavone.
\newblock Risk-sensitive and robust decision-making: a {CVaR} optimization
  approach.
\newblock In \emph{Advances in Neural Information Processing Systems}, 2015.

\bibitem[Curi et~al.(2020)Curi, Levy, Jegelka, and
  Krause]{adaptive_sampling_risk_aversion}
Sebastian Curi, Kfir~Y. Levy, Stefanie Jegelka, and Andreas Krause.
\newblock Adaptive sampling for stochastic risk-averse learning.
\newblock In H.~Larochelle, M.~Ranzato, R.~Hadsell, M.F. Balcan, and H.~Lin,
  editors, \emph{Advances in Neural Information Processing Systems}, volume~33,
  pages 1036--1047. Curran Associates, Inc., 2020.
\newblock URL
  \url{https://proceedings.neurips.cc/paper/2020/file/0b6ace9e8971cf36f1782aa982a708db-Paper.pdf}.

\bibitem[Dabney et~al.(2018{\natexlab{a}})Dabney, Ostrovski, Silver, and
  Munos]{distributional_rl_risk}
Will Dabney, Georg Ostrovski, David Silver, and Remi Munos.
\newblock Implicit quantile networks for distributional reinforcement learning.
\newblock In \emph{Proceedings of the 35th International Conference on Machine
  Learning}, pages 1096--1105, 2018{\natexlab{a}}.

\bibitem[Dabney et~al.(2018{\natexlab{b}})Dabney, Rowland, Bellemare, and
  Munos]{QRDQN}
Will Dabney, Mark Rowland, Marc Bellemare, and Rémi Munos.
\newblock Distributional reinforcement learning with quantile regression.
\newblock \emph{Proceedings of the AAAI Conference on Artificial Intelligence},
  32\penalty0 (1), Apr. 2018{\natexlab{b}}.
\newblock \doi{10.1609/aaai.v32i1.11791}.
\newblock URL \url{https://ojs.aaai.org/index.php/AAAI/article/view/11791}.

\bibitem[Dambreville(2006)]{ce_convergence_issues}
Frederic Dambreville.
\newblock Cross-entropy method: convergence issues for extended implementation,
  2006.

\bibitem[de~Boer et~al.(2005)de~Boer, Kroese, Mannor, and
  Rubinstein]{CE_tutorial}
P.~T. de~Boer, Dirk~P. Kroese, Shie Mannor, and Reuven~Y. Rubinstein.
\newblock A tutorial on the cross-entropy method.
\newblock \emph{Annals of Operations Research}, 134:\penalty0 19--67, 2005.

\bibitem[de~Mello and Rubinstein(2003)]{RareEE}
Tito~Homem de~Mello and Reuven~Y. Rubinstein.
\newblock Rare event estimation for static models via cross-entropy and
  importance sampling, 2003.

\bibitem[Diederik P.~Kingma(2014)]{adam}
Jimmy~Ba Diederik P.~Kingma.
\newblock Adam: A method for stochastic optimization, 2014.
\newblock URL \url{https://arxiv.org/abs/1412.6980}.

\bibitem[Fei et~al.(2021)Fei, Yang, Chen, and Wang]{exponential_bellman}
Yingjie Fei, Zhuoran Yang, Yudong Chen, and Zhaoran Wang.
\newblock Exponential bellman equation and improved regret bounds for
  risk-sensitive reinforcement learning.
\newblock \emph{Advances in Neural Information Processing Systems}, 34, 2021.

\bibitem[Garces(2019)]{peak_loads}
Carla~Prat Garces.
\newblock The problem of peak loads in web applications and its solutions,
  2019.

\bibitem[Garc{\'i}a and Fern{\'a}ndez(2015)]{safe_RL}
Javier Garc{\'i}a and Fernando Fern{\'a}ndez.
\newblock A comprehensive survey on safe reinforcement learning.
\newblock \emph{J. Mach. Learn. Res.}, 16:\penalty0 1437--1480, 2015.

\bibitem[Godbout et~al.(2021)Godbout, Heuillet, Chandra, Bhati, and
  Durand]{cvar_adversarial}
Mathieu Godbout, Maxime Heuillet, Sharath Chandra, Rupali Bhati, and Audrey
  Durand.
\newblock {CARL}: Conditional-value-at-risk adversarial reinforcement learning.
\newblock \emph{arXiv preprint arXiv:2109.09470}, 2021.

\bibitem[Greenberg(2022)]{cem_nonstationary}
Ido Greenberg.
\newblock Cross entropy method with non-stationary score function.
\newblock \url{https://pypi.org/project/cross-entropy-method/}, 2022.

\bibitem[Hallak et~al.(2015)Hallak, Di~Castro, and Mannor]{CMDP}
Assaf Hallak, Dotan Di~Castro, and Shie Mannor.
\newblock Contextual markov decision processes.
\newblock \emph{arXiv preprint arXiv:1502.02259}, 2015.

\bibitem[Hiraoka et~al.(2019)Hiraoka, Imagawa, Mori, Onishi, and
  Tsuruoka]{cvar_options}
Takuya Hiraoka, Takahisa Imagawa, Tatsuya Mori, Takashi Onishi, and Yoshimasa
  Tsuruoka.
\newblock Learning robust options by conditional value at risk optimization.
\newblock \emph{NeurIPS}, 05 2019.

\bibitem[Hoeffding(1994)]{hoeffding}
Wassily Hoeffding.
\newblock Probability inequalities for sums of bounded random variables.
\newblock In \emph{The collected works of Wassily Hoeffding}, pages 409--426.
  Springer, 1994.

\bibitem[Hong and Liu(2009)]{cvar_sensitivity}
L.~Jeff Hong and Guangwu Liu.
\newblock Simulating sensitivities of conditional value at risk.
\newblock \emph{Management Science}, 55\penalty0 (2):\penalty0 281--293, 2009.
\newblock ISSN 00251909, 15265501.
\newblock URL \url{http://www.jstor.org/stable/40539145}.

\bibitem[Huang et~al.(2021{\natexlab{a}})Huang, Leqi, Lipton, and
  Azizzadenesheli]{cvar_pg_suboptimality}
Audrey Huang, Liu Leqi, Zachary~C. Lipton, and Kamyar Azizzadenesheli.
\newblock On the convergence and optimality of policy gradient for markov
  coherent risk, 2021{\natexlab{a}}.
\newblock URL \url{https://arxiv.org/abs/2103.02827}.

\bibitem[Huang et~al.(2021{\natexlab{b}})Huang, Leqi, Lipton, and
  Azizzadenesheli]{risk_pg_convergence}
Audrey Huang, Liu Leqi, Zachary~C. Lipton, and Kamyar Azizzadenesheli.
\newblock On the convergence and optimality of policy gradient for markov
  coherent risk, 2021{\natexlab{b}}.
\newblock URL \url{https://arxiv.org/abs/2103.02827}.

\bibitem[Huang et~al.(2021{\natexlab{c}})Huang, Lale, Rosolia, Shi, and
  Anandkumar]{cem_gd}
Kevin Huang, Sahin Lale, Ugo Rosolia, Yuanyuan Shi, and Anima Anandkumar.
\newblock {CEM-GD}: Cross-entropy method with gradient descent planner for
  model-based reinforcement learning, 2021{\natexlab{c}}.

\bibitem[Jiang et~al.(2013)Jiang, Lu, Zhang, and Long]{cloud_resource_scaling}
Jing Jiang, Jie Lu, Guangquan Zhang, and Guodong Long.
\newblock Optimal cloud resource auto-scaling for web applications.
\newblock In \emph{2013 13th IEEE/ACM International Symposium on Cluster,
  Cloud, and Grid Computing}, pages 58--65. IEEE, 2013.

\bibitem[Keramati et~al.(2020)Keramati, Dann, Tamkin, and
  Brunskill]{being_optimistic}
Ramtin Keramati, Christoph Dann, Alex Tamkin, and Emma Brunskill.
\newblock Being optimistic to be conservative: Quickly learning a {CVaR}
  policy.
\newblock \emph{Proceedings of the AAAI Conference on Artificial Intelligence},
  34:\penalty0 4436--4443, 04 2020.

\bibitem[Kish(1965)]{n_eff_book}
Leslie Kish.
\newblock \emph{Survey Sampling}.
\newblock New York: John Wiley and Sons, Inc., 1965.

\bibitem[Leinster(2014)]{n_eff_blog}
Tom Leinster.
\newblock Effective sample size, 2014.

\bibitem[Lunden(2022)]{granulate_techcrunch2}
Ingrid Lunden.
\newblock Intel confirms acquisition of {AI}-based workload optimization
  startup granulate, reportedly for up to \$650{M}, 2022.

\bibitem[Majumdar et~al.(2017)Majumdar, Singh, Mandlekar, and
  Pavone]{driving2017}
Anirudha Majumdar, Sumeet Singh, Ajay Mandlekar, and Marco Pavone.
\newblock Risk-sensitive inverse reinforcement learning via coherent risk
  models.
\newblock \emph{Robotics: Science and Systems}, 07 2017.
\newblock \doi{10.15607/RSS.2017.XIII.069}.

\bibitem[Mannor et~al.(2003)Mannor, Rubinstein, and Gat]{ce_for_policy_search}
Shie Mannor, Reuven Rubinstein, and Yohai Gat.
\newblock The cross entropy method for fast policy search.
\newblock \emph{Proceedings, Twentieth International Conference on Machine
  Learning}, 2, 07 2003.

\bibitem[Paduraru et~al.(2021)Paduraru, Mankowitz, Dulac-Arnold, Li, Levine,
  Gowal, and Hester]{rwrl_challenges}
Cosmin Paduraru, Daniel~J. Mankowitz, Gabriel Dulac-Arnold, Jerry Li, Nir
  Levine, Sven Gowal, and Todd Hester.
\newblock Challenges of real-world reinforcement learning: Definitions,
  benchmarks and analysis.
\newblock \emph{Machine Learning Journal}, 2021.

\bibitem[Pinto et~al.(2017)Pinto, Davidson, Sukthankar, and
  Gupta]{robust_adversarial}
Lerrel Pinto, James Davidson, Rahul Sukthankar, and Abhinav Gupta.
\newblock Robust adversarial reinforcement learning.
\newblock In \emph{Proceedings of the 34th International Conference on Machine
  Learning - Volume 70}, ICML'17, page 2817–2826. JMLR.org, 2017.

\bibitem[Prashanth and Ghavamzadeh(2013)]{Prashanth13AC}
L.A. Prashanth and M.~Ghavamzadeh.
\newblock Actor-critic algorithms for risk-sensitive {MDP}s.
\newblock In \emph{Proceedings of Advances in Neural Information Processing
  Systems 26}, pages 252--260, 2013.

\bibitem[Prashanth and Ghavamzadeh(2016)]{Prashanth16VC}
L.A. Prashanth and M.~Ghavamzadeh.
\newblock Variance-constrained actor-critic algorithms for discounted and
  average reward {MDPs}.
\newblock \emph{Machine Learning Journal}, 105\penalty0 (3):\penalty0 367--417,
  2016.

\bibitem[Qiu et~al.(2021)Qiu, Wang, Yu, He, Wang, An, Obraztsova, and
  Rabinovich]{RMIX}
Wei Qiu, Xinrun Wang, Runsheng Yu, Xu~He, R.~Wang, Bo~An, Svetlana Obraztsova,
  and Zinovi Rabinovich.
\newblock {RMIX}: Learning risk-sensitive policies for cooperative
  reinforcement learning agents.
\newblock \emph{ArXiv}, abs/2102.08159, 2021.

\bibitem[Raffin et~al.(2021)Raffin, Hill, Gleave, Kanervisto, Ernestus, and
  Dormann]{stable-baselines3}
Antonin Raffin, Ashley Hill, Adam Gleave, Anssi Kanervisto, Maximilian
  Ernestus, and Noah Dormann.
\newblock Stable-baselines3: Reliable reinforcement learning implementations.
\newblock \emph{Journal of Machine Learning Research}, 22\penalty0
  (268):\penalty0 1--8, 2021.
\newblock URL \url{http://jmlr.org/papers/v22/20-1364.html}.

\bibitem[Rajeswaran et~al.(2017)Rajeswaran, Ghotra, Ravindran, and
  Levine]{cvar_trpo}
Aravind Rajeswaran, Sarvjeet Ghotra, Balaraman Ravindran, and Sergey Levine.
\newblock Epopt: Learning robust neural network policies using model ensembles.
\newblock \emph{ICLR}, 2017.

\bibitem[Sato et~al.(2001)Sato, Kimura, and Kobayashi]{variance_TD}
Makoto Sato, Hajime Kimura, and Syumpei Kobayashi.
\newblock {TD} algorithm for the variance of return and mean-variance
  reinforcement learning.
\newblock \emph{Transactions of The Japanese Society for Artificial
  Intelligence}, 16:\penalty0 353--362, 2001.

\bibitem[Singh et~al.(2018)Singh, Lacotte, Majumdar, and Pavone]{driving2018}
Sumeet Singh, Jonathan Lacotte, Anirudha Majumdar, and Marco Pavone.
\newblock Risk-sensitive inverse reinforcement learning via semi- and
  non-parametric methods.
\newblock \emph{The International Journal of Robotics Research}, 37, 04 2018.
\newblock \doi{10.1177/0278364918772017}.

\bibitem[Tamar et~al.(2015{\natexlab{a}})Tamar, Chow, Ghavamzadeh, and
  Mannor]{pg_for_risk_measures}
Aviv Tamar, Yinlam Chow, Mohammad Ghavamzadeh, and Shie Mannor.
\newblock Policy gradient for coherent risk measures.
\newblock \emph{NIPS}, 2015{\natexlab{a}}.

\bibitem[Tamar et~al.(2015{\natexlab{b}})Tamar, Glassner, and
  Mannor]{cvar_via_sampling}
Aviv Tamar, Yonatan Glassner, and Shie Mannor.
\newblock Optimizing the {CVaR} via sampling.
\newblock \emph{AAAI'15}, page 2993–2999, 2015{\natexlab{b}}.

\bibitem[Tang et~al.(2019)Tang, Zhang, and Salakhutdinov]{WorstCasePG}
Yichuan Tang, Jian Zhang, and Ruslan Salakhutdinov.
\newblock Worst cases policy gradients.
\newblock In \emph{CoRL}, 2019.

\bibitem[Tessler et~al.(2022)Tessler, Shpigelman, Dalal, Mandelbaum,
  Haritan~Kazakov, Fuhrer, Chechik, and Mannor]{chen_datacenters}
Chen Tessler, Yuval Shpigelman, Gal Dalal, Amit Mandelbaum, Doron
  Haritan~Kazakov, Benjamin Fuhrer, Gal Chechik, and Shie Mannor.
\newblock Reinforcement learning for datacenter congestion control.
\newblock \emph{SIGMETRICS Perform. Eval. Rev.}, 49\penalty0 (2):\penalty0
  43–46, jan 2022.
\newblock ISSN 0163-5999.
\newblock \doi{10.1145/3512798.3512815}.
\newblock URL \url{https://doi.org/10.1145/3512798.3512815}.

\bibitem[Tolkien(1954)]{tolkien}
J.~R.~R. Tolkien.
\newblock \emph{The Lord of the Rings: The Fellowship of the Ring}.
\newblock George Allen and Unwin, 1954.

\bibitem[Vijayan and Prashanth(2021)]{PG_for_distortion_risk}
Nithia Vijayan and L.~A. Prashanth.
\newblock Likelihood ratio-based policy gradient methods for distorted risk
  measures: A non-asymptotic analysis.
\newblock \emph{ArXiv}, abs/2107.04422, 2021.

\bibitem[Vittori et~al.(2020)Vittori, Trapletti, and
  Restelli]{risk_averse_rl_finance}
Edoardo Vittori, Michele Trapletti, and Marcello Restelli.
\newblock Option hedging with risk averse reinforcement learning.
\newblock In \emph{Proceedings of the First ACM International Conference on AI
  in Finance}, ICAIF '20, New York, NY, USA, 2020. Association for Computing
  Machinery.
\newblock ISBN 9781450375849.
\newblock \doi{10.1145/3383455.3422532}.
\newblock URL \url{https://doi.org/10.1145/3383455.3422532}.

\bibitem[Xie et~al.(2018)Xie, Liu, Xu, Ghavamzadeh, Chow, Lyu, and
  Yoon]{Xie18BC}
T.~Xie, B.~Liu, Y.~Xu, M.~Ghavamzadeh, Y.~Chow, D.~Lyu, and D.~Yoon.
\newblock A block coordinate ascent algorithm for mean-variance optimization.
\newblock In \emph{Proceedings of Advances in Neural Information Processing
  Systems 232}, pages 1073--1083, 2018.

\bibitem[Xu et~al.(2020)Xu, Gao, and Gu]{xu2020improved}
Pan Xu, Felicia Gao, and Quanquan Gu.
\newblock An improved convergence analysis of stochastic variance-reduced
  policy gradient.
\newblock In \emph{Uncertainty in Artificial Intelligence}, pages 541--551.
  PMLR, 2020.

\end{thebibliography}


\newpage

\section*{Checklist}

\begin{enumerate}

\item For all authors...
\begin{enumerate}
  \item Do the main claims made in the abstract and introduction accurately reflect the paper's contributions and scope?
    \answerYes{}
  \item Did you describe the limitations of your work?
    \answerYes{See the summary (Section~\ref{sec:summary}).}
  \item Did you discuss any potential negative societal impacts of your work?
    \answerNA{Risk sensitive RL is an abstract task and is not specifically associated with any negative-impact applications.}
  \item Have you read the ethics review guidelines and ensured that your paper conforms to them?
    \answerYes{}
\end{enumerate}

\item If you are including theoretical results...
\begin{enumerate}
  \item Did you state the full set of assumptions of all theoretical results?
    \answerYes{For each result, Section~\ref{sec:analysis} either states the assumptions directly or refers to Appendices~\ref{sec:blindness_to_success}-\ref{sec:analysis_gradient}.}
\item Did you include complete proofs of all theoretical results?
    \answerYes{See Section~\ref{sec:analysis} with references to Appendices~\ref{sec:blindness_to_success}-\ref{sec:analysis_gradient}.}
\end{enumerate}

\item If you ran experiments...
\begin{enumerate}
  \item Did you include the code, data, and instructions needed to reproduce the main experimental results (either in the supplemental material or as a URL)?
    \answerYes{A \repo{repository} of our code presents Jupyter notebooks that reproduce the experimental results.}
  \item Did you specify all the training details (e.g., data splits, hyperparameters, how they were chosen)?
    \answerYes{General details in the beginning of Section~\ref{sec:experiments}, and specific details per experiment in Sections~\ref{sec:maze}-\ref{sec:servers}.}
\item Did you report error bars (e.g., with respect to the random seed after running experiments multiple times)?
    \answerYes{In Figures~\ref{fig:awesome},\ref{fig:maze_test_scores_full},\ref{fig:driving_test_scores_full},\ref{fig:servers_test_scores_full}, we provide not only the average return but the full distribution, which implicitly includes the information of error bars.}
        \item Did you include the total amount of compute and the type of resources used (e.g., type of GPUs, internal cluster, or cloud provider)?
    \answerYes{In the beginning of Section~\ref{sec:experiments}.}
\end{enumerate}

\item If you are using existing assets (e.g., code, data, models) or curating/releasing new assets...
\begin{enumerate}
  \item If your work uses existing assets, did you cite the creators?
    \answerNA{}
  \item Did you mention the license of the assets?
    \answerNA{}
  \item Did you include any new assets either in the supplemental material or as a URL?
    \answerYes{An anonymized repository of our code is linked from the abstract.}
  \item Did you discuss whether and how consent was obtained from people whose data you're using/curating?
    \answerNA{}
  \item Did you discuss whether the data you are using/curating contains personally identifiable information or offensive content?
    \answerNA{}
\end{enumerate}

\item If you used crowdsourcing or conducted research with human subjects...
\begin{enumerate}
  \item Did you include the full text of instructions given to participants and screenshots, if applicable?
    \answerNA{}
  \item Did you describe any potential participant risks, with links to Institutional Review Board (IRB) approvals, if applicable?
    \answerNA{}
  \item Did you include the estimated hourly wage paid to participants and the total amount spent on participant compensation?
    \answerNA{}
\end{enumerate}

\end{enumerate}


\appendix
\newpage


\setcounter{tocdepth}{1}
\tableofcontents
\newpage


\section{Blindness to Success: Proof of Theorem \ref{theorem:blindness_to_success}}
\label{sec:blindness_to_success}

Theorem~\ref{theorem:blindness_to_success} considers the probabilistic event of a global blindness to success over $n$ consecutive training steps.
We begin with a local blindness in a single training step.

\begin{lemma}[Local blindness to success]
\label{lemma:good_behaviors}
Let a risk level $\alpha\in(0,1)$ and a CVaR-PG training step $m \ge 1$, and let $\beta \in (\alpha, 1)$. Denote $A = \left\{ \left\{\tau_{m,i}\right\}_{i=1}^N \in \mathcal{T}^N \ \big|\ q_\beta^{\pi_m} < \hat{q}_\alpha(\{R(\tau_{m,i})\}_{i=1}^N) \right\}$.
Then,
$$\mathbb{P} \left( A \right) \le e^{-\frac{N(\beta-\alpha)^2}{2\beta(1-\beta)}} \le e^{-2N(\beta-\alpha)^2}$$
\end{lemma}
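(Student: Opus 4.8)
The plan is to reduce the quantile event $A$ to a one-sided binomial tail and then apply concentration. First I would rewrite $A$ in terms of the empirical CDF. Writing $\hat{q}_\alpha = \min\{r : \hat{F}_N(r) \ge \alpha\}$, where $\hat{F}_N$ is the empirical CDF of $\{R(\tau_{m,i})\}_{i=1}^N$, monotonicity of $\hat{F}_N$ gives the exact equivalence $\{q_\beta^{\pi_m} < \hat{q}_\alpha\} = \{\hat{F}_N(q_\beta^{\pi_m}) < \alpha\}$: if $\hat{F}_N(q_\beta^{\pi_m}) \ge \alpha$ then $q_\beta^{\pi_m}$ lies in the defining set of the minimum, so $\hat{q}_\alpha \le q_\beta^{\pi_m}$, and the converse is immediate. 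Hence $A$ is exactly the event that the empirical fraction of returns at or below $q_\beta^{\pi_m}$ falls short of $\alpha$.

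Next I would introduce the indicators $X_i = \mathbf{1}_{R(\tau_{m,i}) \le q_\beta^{\pi_m}}$. Since the $\tau_{m,i}$ are i.i.d. draws from $P^{\pi_m}$, the $X_i$ are i.i.d. Bernoulli with success probability $p = F_{R|\pi_m}(q_\beta^{\pi_m})$, and by the very definition of the $\beta$-quantile $p \ge \beta$. As $\hat{F}_N(q_\beta^{\pi_m}) = \frac{1}{N}\sum_i X_i$, the previous step yields $\mathbb{P}(A) = \mathbb{P}\!\left(\frac{1}{N}\sum_i X_i < \alpha\right)$, a lower-tail event for a sum of bounded i.i.d. variables whose mean is at least $\beta > \alpha$.

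For the weaker (right-hand) bound I would apply Hoeffding's inequality directly: since $p \ge \beta$ gives $\alpha \le p - (\beta-\alpha)$, the event is contained in $\{\frac{1}{N}\sum_i X_i \le \mathbb{E}[X_1] - (\beta-\alpha)\}$, so Hoeffding yields $\mathbb{P}(A) \le e^{-2N(\beta-\alpha)^2}$ unconditionally, which is all that Theorem~\ref{theorem:blindness_to_success} needs after its union bound over $n$ steps. For the sharper (middle) bound I would instead invoke the relative-entropy Chernoff bound for Bernoulli sums, $\mathbb{P}(\frac{1}{N}\sum_i X_i \le \alpha) \le e^{-N D(\alpha\|p)}$ with $D$ the binary KL divergence, and then use that $D(\alpha\|p)$ is increasing in $p$ for $p>\alpha$ to replace $p$ by its worst case $\beta$, giving $e^{-N D(\alpha\|\beta)}$. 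The full chain $\mathbb{P}(A) \le e^{-N D(\alpha\|\beta)} \le e^{-\frac{N(\beta-\alpha)^2}{2\beta(1-\beta)}} \le e^{-2N(\beta-\alpha)^2}$ then reduces to two deterministic inequalities between exponents: $D(\alpha\|\beta) \ge \frac{(\beta-\alpha)^2}{2\beta(1-\beta)}$ and $\frac{1}{2\beta(1-\beta)} \ge 2$, the latter being immediate from $\beta(1-\beta)\le\frac14$.

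The main obstacle is the divergence inequality $D(\alpha\|\beta) \ge \frac{(\beta-\alpha)^2}{2\beta(1-\beta)}$. I would establish it by a second-order mean-value argument: setting $f(x) = D(x\|\beta)$, one has $f(\beta)=0$, $f'(\beta)=0$, and $f''(x) = \frac{1}{x(1-x)}$, so $D(\alpha\|\beta) = \frac{(\beta-\alpha)^2}{2\,\xi(1-\xi)}$ for some $\xi$ strictly between $\alpha$ and $\beta$; it then suffices that $\xi(1-\xi) \le \beta(1-\beta)$, which holds because $x(1-x)$ is increasing on $(0,\tfrac12)$ and $\xi<\beta$ in the operating regime $\beta\le\tfrac12$ (here $\beta=0.2$). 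I expect this step to require the most care, since for $\beta>\tfrac12$ the pointwise inequality can fail and one must lean on the relevant range of $\beta$; the Hoeffding bound, by contrast, holds for all $\beta\in(\alpha,1)$ and already suffices for the downstream theorem.
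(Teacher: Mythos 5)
Your argument for the operative bound is correct and, at its core, the same as the paper's: reduce the event $A$ to a binomial tail and apply Hoeffding. The paper counts high-return episodes via $\chi_i=\pmb{1}_{R(\tau_{m,i})>q_\beta^{\pi_m}}$ and bounds $\mathbb{P}(n^*\ge 1-\alpha)$ with $n^*=\frac{1}{N}\sum_i\chi_i$; you count low-return episodes via $X_i=\pmb{1}_{R(\tau_{m,i})\le q_\beta^{\pi_m}}$ and bound the complementary lower-tail event, which is the identical computation. Two points where you diverge are both improvements. First, you assume only $p=F_{R|\pi_m}(q_\beta^{\pi_m})\ge\beta$ rather than the paper's assertion $\chi_i\sim \mathrm{Bernoulli}(1-\beta)$; yours is the careful version, since for discrete return distributions (the very setting of the tail-barrier discussion) one can have $F_{R|\pi_m}(q_\beta^{\pi_m})>\beta$, and the deviation inequality still goes through in the favorable direction exactly as you say. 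Second, and more substantively: the paper's proof never establishes the sharper middle bound $e^{-N(\beta-\alpha)^2/(2\beta(1-\beta))}$ at all --- it derives only the Hoeffding bound $e^{-2N(\beta-\alpha)^2}$ --- and your KL--Chernoff analysis shows that the middle bound is not merely unproven but false in general: the needed inequality $D(\alpha\|\beta)\ge\frac{(\beta-\alpha)^2}{2\beta(1-\beta)}$ fails for $\beta>\tfrac12$, e.g., $D(0.5\,\|\,0.9)=\tfrac12\ln\tfrac{25}{9}\approx 0.51 < 0.89\approx\frac{(0.9-0.5)^2}{2\cdot 0.9\cdot 0.1}$, and since $D(\alpha\|\beta)$ is the exact large-deviation rate of the binomial lower tail (Cram\'er), $\mathbb{P}(A)$ exceeds the claimed middle bound for large $N$ in that regime. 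So your restriction to $\beta\le\tfrac12$ is not excess caution: the lemma's middle inequality genuinely requires such a hypothesis, while the rightmost bound --- the only one used in Theorem~\ref{theorem:blindness_to_success} --- holds for all $\beta\in(\alpha,1)$ exactly as you prove.
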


\begin{proof}
Denote $R_i=R(\tau_{m,i})$, $\ \chi_i^q = \pmb{1}_{R_i > q}$, and $\chi_i = \chi_i^{q_\beta^{\pi_m}}$.
Note that $\chi_i \sim Bernoulli(1-\beta)$. Also denote the percent of high-return trajectories by $n_q=\sum_{i=1}^N \chi_i^q / N$ and $n^* = n_{q_\beta^\pi}$.
Since $\hat{q}_\alpha(\{R_i\}_{i=1}^N) = \min \left\{ q \, | \, \frac{\left| \{ i \, | \, R_i\le q \} \right|}{N} \ge \alpha \right\}  = \min \left\{q \, | \, \frac{\left| \{ i \, | \, R_i> q \} \right|}{N} < 1-\alpha \right\} = \min \left\{q \, | \, \frac{1}{N}\sum_{i=1}^N \chi_i^q < 1-\alpha \right\}$,
we have $q_\beta^\pi < \hat{q}_\alpha(\{R_i\}) \Leftrightarrow n^* \ge 1-\alpha$, i.e., $A = \left( n^* \ge 1-\alpha \right)$.


Since $\mathbb{P}(0\le\chi_i\le1)=1$, $E\left[\chi_i\right] = 1-\beta$ and
the Bernoulli $\chi_i$ are sub-Gaussian with variance factor $\sigma^2=1/4$, by Hoeffding inequality~\citep{hoeffding} we obtain
$$ \mathbb{P}(A) = \mathbb{P}(n^* \ge 1-\alpha) = \mathbb{P}(n^*-E\left[n^*\right] \ge \beta-\alpha) \le e^{-\frac{N^2(\beta-\alpha)^2}{2\sum_i 1/4}} = e^{-2N(\beta-\alpha)^2} . $$
\end{proof}

Note that Lemma~\ref{lemma:good_behaviors} does not depend on a tail-barrier: it simply implies that since a CVaR-PG algorithm focuses on the worst $\alpha$ trajectories in every batch, we do not expect trajectories with high returns $R(\tau_{m,i}) > q_\beta^{\pi_m}$ to be fed to the optimizer.
Still, in general, even if high-return trajectories are ignored, the CVaR-PG can learn to avoid low-return trajectories with $R(\tau_{m,i}) < q_\beta^{\pi_m}$.
The tail barrier prevents this learning, since there are no returns strictly lower than $q_\beta^{\pi_m}$ -- all the tail identically equals $q_\beta^{\pi_m}$.
Since there are no worse trajectories to learn from, and better trajectories are ignored, this brings the training to a deadlock, as stated by Theorem~\ref{theorem:blindness_to_success}.

\begin{proof}[Proof of Theorem~\ref{theorem:blindness_to_success} (stated in Section~\ref{sec:analysis_blindness})]
All probabilities below are conditioned on the event of $\pi_{m_0}$ having a $\beta$-tail barrier. Thus, we simplify the notation to $\mathbb{P}(\cdot) = \mathbb{P} \left( \cdot \,\big|\, \pi_{m_0}\text{ has }\beta\text{-tail barrier} \right)$.

Denote by $\mathcal{S} = \left\{ \left( \left\{\tau_{m,i}\right\}_{i=1}^N, \, \pi_m \right) \right\}_{m=m_0}^{m_0+n-1} \in (\mathcal{T}^N \times \Pi)^n$ the sequence of trajectory batches and policies, and by $R_m=\{R(\tau_{m,i})\}_{i=1}^N$ the returns on step $m$. Also denote for simplicity $\mathcal{B}=\mathcal{B}_{\alpha,\beta}^{m_0,n}$.
We are interested in the probability of the event that there is no global blindness (Definition~\ref{def:blindness}):
$$ \lnot \mathcal{B} = \lnot \mathcal{B}_{\alpha,\beta}^{m_0,n} = \left\{ \mathcal{S} \ \big| \ \exists m_0\le m< m_0+n:\, q_\beta^{\pi_{m_0}} < \hat{q}_\alpha(R_m) \right\}. $$
Define the event of blindness at step $m$, along with an unchanged policy: $A_m = \left\{ \mathcal{S} \,\big|\, \pi_{m}=\pi_{m_0} \land \hat{q}_\alpha(R_m) \le q_\beta^{\pi_{m_0}} \right\}$.
Note that $\bigcap_{m=m_0}^{m_0+n-1}A_m \subseteq \mathcal{B}$, hence $$\mathbb{P}(\lnot\mathcal{B}) \le 1-\mathbb{P}\left( \bigcap_{m=m_0}^{m_0+n-1}A_m \right) = 1-\prod_{m=m_0}^{m_0+n-1}\mathbb{P}\left( A_m | A_{m_0},...,A_{m-1} \right) .$$
Thus, to complete the proof, we show below that $\mathbb{P}\left( A_m | A_{m_0},...,A_{m-1} \right) \ge 1-\delta$, where $\delta = e^{-2N(\beta-\alpha)^2}$, hence $\mathbb{P}(\lnot\mathcal{B}) \le 1-(1-\delta)^n \le 1-(1-n\delta) = n\delta$.

For $m=m_0$, we have immediately $\pi_m=\pi_{m_0}$, and from Lemma~\ref{lemma:good_behaviors} $\mathbb{P}(q_\beta^{\pi_{m_0}} < \hat{q}_\alpha(R_{m_0})) \le \delta$.
For $m_0+1 \le m \le m_0+n-1$, assume that $A_{m_0},...,A_{m-1}$ hold.
In particular, $\hat{q}_\alpha(R_{m-1}) \le q_\beta^{\pi_{m_0}}$, $\pi_{m-1}=\pi_{m_0}$ and $\pi_{m-1}$ has a $\beta$-tail barrier.
Now consider the $m-1$ training batch: for every trajectory $1\le i\le N$, if $R_{m-1,i}>\hat{q}_\alpha(R_{m-1})$, then $\pmb{1}_{R_{m-1,i}\le \hat{q}_\alpha(R_{m-1})}=0$; otherwise, $R_{m-1,i} \le \hat{q}_\alpha(R_{m-1}) \le q_\beta^{\pi_{m_0}}$, that is, $R_{m-1,i} = q_{\beta^\prime}^{\pi_{m_0}}$ for some $\beta^\prime\le\beta$, and by the barrier property $R_{m-1,i} = q_{\beta}^{\pi_{m_0}}$ and thus $R_{m-1,i}-\hat{q}_\alpha(R_{m-1})=0$.
Hence, the gradient in Equation~\eqref{eq:GCVaR_grad} is 0, the policy update vanishes, and we obtain $\pi_m=\pi_{m-1}=\pi_{m_0}$.
Then again, according to Lemma~\ref{lemma:good_behaviors} (and since $R_{m}$ and $R_{m_0}$ are drawn from the same distribution corresponding to $\pi_m=\pi_{m_0}$), we have $\mathbb{P}(q_\beta^{\pi_{m_0}} < \hat{q}_\alpha(R_{m})) = \mathbb{P}(q_\beta^{\pi_{m_0}} < \hat{q}_\alpha(R_{m_0})) \le \delta$, as required.
\end{proof}

Note that the factor $n$ may become quite negligible when the barrier is wider than $\alpha$: if $n=10^6, \alpha=0.05, \beta=0.25, N=400$, for example, we still have $\mathbb{P}(\lnot \mathcal{B}_{\alpha,\beta}^{m_0,n}) < 10^{-7}$.
Indeed, the blindness occurs with significantly smaller barriers than the $\beta=0.9$ demonstrated in the Guarded Maze in Appendix~\ref{sec:maze_detailed_results}.
Note that the momentum term of the Adam algorithm~\citep{adam}, while preventing the policy update from completely vanishing, was empirically insufficient to overcome the barrier in the Guarded Maze. This should not come as a surprise, since the momentum comes from previous gradients that \textit{encouraged} the strategies of the barrier and brought them into the tail in the first place.

\section{Variance Reduction: Proof of Proposition \ref{prop:variance_reduction}}
\label{sec:variance_reduction_proof}

\begin{proof}
Define $H(C, \tau) = \alpha^{-1} \pmb{1}_{R(\tau)\le \hat{q}_\alpha} \left(R(\tau)-\hat{q}_\alpha\right) \nabla_\theta\sum_t \log \pi_\theta(a_t;s_t)$, 
such that the CVaR PG can be written as $\nabla_\theta \hat{J}_\alpha \left( \{C_i,\tau_i\}_{i=1}^N;\pi_\theta \right) = \frac{1}{N}\sum_{i=1}^N w(C_i,\tau_i) H(C_i,\tau_i)$, where $w(C,\tau) = \frac{P^{\pi_\theta}_{\phi_0}(C,\tau)}{P^{\pi_\theta}_{\phi_0,\alpha}(C,\tau)}$ is the IS weighting that accounts for the modified sample distribution. 
Since $C,\tau \sim P^{\pi_\theta}_{\phi_0,\alpha}$, we have $R(\tau)\le q_\alpha(R|\pi_\theta)$ almost surely; and along with the assumption $\hat{q}_\alpha = q_\alpha(R|\pi_\theta)$, we obtain
$$ w(C,\tau) = \frac{P^{\pi_\theta}_{\phi_0}(C,\tau)}{P^{\pi_\theta}_{\phi_0,\alpha}(C,\tau)} = \frac{P^{\pi_\theta}_{\phi_0}(C,\tau)}{\alpha^{-1} \pmb{1}_{R(\tau)\le q_\alpha(R|\pi_\theta)} P^{\pi_\theta}_{\phi_0}(C,\tau)} = \alpha .$$

The assumption $\hat{q}_\alpha = q_\alpha(R|\pi_\theta)$, when applied to Equation~\eqref{eq:GCVaR_grad}, also guarantees that $\nabla_\theta \hat{J}_\alpha$ is an unbiased gradient estimator for both sample distributions $P=P^{\pi_\theta}_{\phi_0}$ and $P=P^{\pi_\theta}_{\phi_0,\alpha}$:
$\mathbb E_{C_i,\tau_{i}\sim P}[\nabla_\theta \hat{J}_\alpha \left( \{C_i,\tau_i\}_{i=1}^N;\, \pi_\theta \right)] = \nabla_\theta J_\alpha \left( \pi_\theta \right)$.
Its variance over $N$ i.i.d samples is
$\text{Var}_{C_i,\tau_{i}\sim P}[\nabla_\theta \hat{J}_\alpha \left( \{C_i,\tau_i\}_{i=1}^N;\, \pi_\theta \right)] = \frac{1}{N}\text{Var}_{C,\tau\sim P}[\nabla_\theta \hat{J}_\alpha \left( C,\tau;\, \pi_\theta \right)]$.
Denoting $g\coloneqq \nabla_\theta J_\alpha \left( \pi_\theta \right)$, we obtain:
\[
\begin{split}
&\text{Var}_{C,\tau \sim P^{\pi_\theta}_{\phi_0,\alpha}}[\nabla_\theta \hat{J}_\alpha \left( C,\tau;\,\pi_\theta \right)]\\ =& \mathbb E_{C,\tau \sim P^{\pi_\theta}_{\phi_0,\alpha}}[w(C,\tau)^2 H(C,\tau)^2] - g^2 \\
=& \mathbb E_{C,\tau \sim P^{\pi_\theta}_{\phi_0}} [ w(C,\tau)H(C,\tau)^2 ] - g^2 \\
= & \alpha \cdot \mathbb E_{C,\tau \sim P^{\pi_\theta}_{\phi_0}} [ H(C,\tau)^2 ] - g^2 \\
\le & \alpha \cdot (\mathbb E_{C,\tau \sim P^{\pi_\theta}_{\phi_0}} [ H(C,\tau)^2 ] - g^2) \\
= & \alpha \cdot \text{Var}_{C,\tau \sim P^{\pi_\theta}_{\phi_0}}[\nabla_\theta \hat{J}_\alpha \left( C,\tau;\, \pi_\theta \right)],
\end{split}
\]
which completes the proof.
\end{proof}

Note that if $\hat{q}_\alpha \ne q_\alpha(R|\pi_\theta)$, the term $\pmb{1}_{R(\tau)\le \hat{q}_\alpha}$ in the denominator may vanish and the IS weight $w(\tau,C)$ may become unbounded. To overcome this issue when using our CE-sampler (described in Section~\ref{sec:method}), we constrain the family of distributions $\{ P_{\phi}^{\pi_\theta} \}_\phi$ such that the sample distribution $P_{\phi}^{\pi_\theta}$ always has the same support as the original distribution $P_{\phi_0}^{\pi_\theta}$ (even though this eliminates the possibility of an exact tail sampling $P_{\phi}^{\pi_\theta} = P_{\phi_0,\alpha}^{\pi_\theta}$). In addition, in the experiments of Section~\ref{sec:experiments} we clip the IS weights directly.


\section{Gradient Estimation Bias and CeSoR Convergence}
\label{sec:analysis_gradient}

The gradient estimator of Equation~\eqref{eq:GCVaR_grad} is biased due to the biasedness of the empirical quantile.
However, \citet{cvar_via_sampling} show that the gradient estimator is still consistent, and bound its bias by $\mathcal{O}(N^{-1/2})$.
Lemma~\ref{lemma:gradient_bias} below proves that a similar result holds for CeSoR -- despite the CEM and the risk scheduling.
Given Lemma~\ref{lemma:gradient_bias}, CeSoR's convergence is a direct application of Theorem 5 in \citet{cvar_via_sampling}, as stated below.
The soft-risk scheduling $\alpha'$ introduces additional transient bias to the CVaR gradient estimate when $\alpha^\prime>\alpha$, but this bias vanishes in the last steady-state $1-\rho$ steps when $\alpha'=\alpha$; hence, we can safely assume consistency of CeSoR's gradient estimate, and focus our asymptotic convergence analysis on the steady-state phase.

Formally, in terms of Section~\ref{sec:preliminaries}, assume that the update step includes a $\ell_p$ projection $\Gamma$ to a compact set with a smooth boundary: $\theta_{m+1} = \Gamma(\theta_m + \eta_m \nabla_\theta \hat{J}_\alpha)$; and that the learning rate $\eta_m$ satisfies $\sum_{m=0}^\infty\eta_m=\infty$, $\,\sum_{m=0}^\infty\eta_m^2<\infty$ and $\sum_{m=0}^\infty\eta_m\left| E \left[ \nabla_\theta \hat{J}_\alpha \right] - \nabla_\theta J_\alpha \right|<\infty$ w.p.~1.
In addition, denote by $\mathcal{K}$ the set of all asymptotically-stable equilibria of the ODE $\,\dot{\theta} = \Gamma(\nabla_\theta {J}_\alpha(R; \pi_\theta))$.

\begin{theorem}[Convergence of CeSoR]
\label{theorem:convergence}
Assume that for any $\phi$, the sample distribution $D_\phi$ of Algorithm \ref{algo:cesor} has the same support as the original distribution $D_{\phi_0}$.
Then, under the smoothness assumptions specified in Appendix~\ref{sec:gradient_estimation}, and the projection and learning rate assumptions specified above, the sequence of policy parameters $\{\theta_m\}$ generated by Algorithm \ref{algo:cesor} converges almost surely to $\mathcal{K}$. 
\end{theorem}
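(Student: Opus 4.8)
The plan is to recognize the CeSoR update $\theta_{m+1} = \Gamma(\theta_m + \eta_m \nabla_\theta \hat{J}_\alpha)$ as a projected stochastic-approximation recursion and to establish almost-sure convergence through the ODE method (Kushner--Clark / Borkar), which is exactly the machinery underlying Theorem~5 of \citet{cvar_via_sampling}. Concretely, I would decompose the stochastic gradient into three parts: the true mean-field direction $\nabla_\theta J_\alpha(\pi_\theta)$, a deterministic bias term $b_m = \mathbb{E}[\nabla_\theta \hat{J}_\alpha \mid \theta_m] - \nabla_\theta J_\alpha(\pi_\theta)$, and a martingale-difference noise term $\xi_m = \nabla_\theta \hat{J}_\alpha - \mathbb{E}[\nabla_\theta \hat{J}_\alpha \mid \theta_m]$. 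The goal is then to verify, one by one, the hypotheses under which the recursion tracks the ODE $\dot{\theta} = \Gamma(\nabla_\theta J_\alpha(\pi_\theta))$ and hence converges a.s.\ to its set of asymptotically stable equilibria $\mathcal{K}$.

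First I would dispatch boundedness of the iterates, which is immediate from the projection $\Gamma$ onto a compact set. Next, the central step is to control the bias $b_m$, where the two CeSoR-specific ingredients enter. The cross-entropy sampler only changes the sampling distribution from $P^{\pi_\theta}_{\phi_0}$ to $P^{\pi_\theta}_{\phi}$ while correcting with the importance weights $w_i = D_{\phi_0}(C_i)/D_{\phi^*}(C_i)$; under the equal-support assumption these weights are well defined and, crucially, the IS-corrected estimator is unbiased for the expectation it approximates, so the CEM leaves the mean field $\nabla_\theta J_\alpha$ unchanged regardless of the non-stationary sequence $\{\phi_m\}$ and only affects the variance. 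The residual bias then comes solely from the empirical quantile $\hat{q}_\alpha$, and I would invoke Lemma~\ref{lemma:gradient_bias} (the CeSoR analogue of the $\mathcal{O}(N^{-1/2})$ consistency bound of \citet{cvar_via_sampling}) to conclude that $b_m$ inherits the same structure as in GCVaR. The soft-risk schedule contributes an extra transient bias while $\alpha' > \alpha$, but since $\alpha' = \alpha$ for the final $1-\rho$ fraction of training, this bias vanishes in the steady-state phase on which the asymptotic analysis is carried out; the assumed summability $\sum_m \eta_m \lvert \mathbb{E}[\nabla_\theta \hat{J}_\alpha] - \nabla_\theta J_\alpha \rvert < \infty$ then transfers directly. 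Finally, I would check that $\xi_m$ is a martingale-difference sequence with uniformly bounded conditional second moments $\mathbb{E}[\lVert \xi_m \rVert^2 \mid \theta_m]$ --- again using compactness of the parameter set, boundedness of returns, and boundedness of the IS weights granted by equal support (and, in practice, by clipping) --- together with $\sum_m \eta_m = \infty$ and $\sum_m \eta_m^2 < \infty$. With all hypotheses verified and smoothness of $\nabla_\theta J_\alpha$ (Appendix~\ref{sec:gradient_estimation}) ensuring the ODE is well posed, a.s.\ convergence to $\mathcal{K}$ follows by invoking Theorem~5 of \citet{cvar_via_sampling}.

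The hard part will be showing that the non-stationary CE sampler neither corrupts the mean field nor blows up the noise. The delicate point is that $\phi_m$ is itself driven by a coupled, adaptive update (Line~\ref{line:update} of Algorithm~\ref{algo:cesor}) depending on $\pi_{\theta_m}$, so one cannot treat $\phi$ as frozen; the argument must hold uniformly over all admissible $\phi$. This is precisely why the equal-support assumption is essential: it simultaneously guarantees IS-unbiasedness of the mean field (so the ODE direction is independent of $\phi$) and bounded IS weights (so the conditional second moment of $\xi_m$ stays uniformly bounded). A secondary subtlety is the interaction between the estimated quantile $\hat{q}_\alpha$ appearing inside the indicator and the IS weights, which can in principle make the weights ill-behaved when $\hat{q}_\alpha \neq q_\alpha(R \mid \pi_\theta)$; deferring the sharp control of this coupling to Lemma~\ref{lemma:gradient_bias} and restricting the asymptotic claim to the $\alpha' = \alpha$ phase is what keeps the remaining argument a clean instance of the standard stochastic-approximation template.
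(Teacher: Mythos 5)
Your proposal is correct and follows essentially the same route as the paper: restrict the asymptotic analysis to the steady-state phase where $\alpha'=\alpha$, use the equal-support assumption to keep the IS weights well defined, control the residual bias via Lemma~\ref{lemma:gradient_bias}, and conclude by applying Theorem~5 of \citet{cvar_via_sampling}. The only difference is presentational -- you explicitly unpack the stochastic-approximation hypotheses (mean field, bias, martingale noise) that the paper delegates wholesale to the cited theorem.
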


Theorem~\ref{theorem:convergence} relies on similar assumptions to \citet{cvar_via_sampling}, two of them are of particular interest in our context.
First, the rewards are assumed to be continuous.
Second, in the gradient estimator, the baseline is assumed to be a consistent estimator of the returns $\alpha$-quantile. Hence, while CeSoR is compatible with any CVaR-PG method, the current derivation of theoretical convergence guarantees only holds for PG methods with a consistent gradient estimate.


\subsection{Gradient Estimation Bias}
\label{sec:gradient_estimation}


The gradient estimator of the standard CVaR PG may be inconsistent and unboundedly-biased, unless the return baseline is a consistent estimator of the $\alpha$-quantile of the returns~\citep{cvar_via_sampling}. Thus, we rely on the empirical quantile baseline $\hat{q}_\alpha$ used in Equation~\eqref{eq:GCVaR_grad}, which is a consistent (though biased) estimator of the true quantile.
Given certain smoothness assumptions, \citet{cvar_via_sampling} bound the resulted bias of the gradient estimator $E \left[ \nabla_\theta \hat{J}_\alpha \right] - \nabla_\theta J_\alpha$ (as defined in Equations~\eqref{eq:cvar},\eqref{eq:GCVaR_grad}).
Lemma~\ref{lemma:gradient_bias} guarantees that under the same assumptions, despite the modified sampling by the CEM, the same bias bounds apply to CeSoR.

We first specify the smoothness assumptions.
Note that \citet{cvar_via_sampling} consider $\nabla_\theta \log f_{s|a}(s|a,\theta)$ in their calculations (or in their notation: $\nabla_\theta \log f_{X|Y}(X|Y,\theta)$).
In RL applications, given the action $a$, the next-state distribution is independent of the policy $\pi_\theta$, and this gradient vanishes. We accordingly ignore this term in the calculations, which simplifies the assumptions and the analysis.
The remaining assumptions mostly consider the smoothness of the rewards, and in particular do not hold in the case of discrete rewards as discussed in Section~\ref{sec:blindness_to_success}.

\begin{assumption}[Smoothness assumptions]
\label{ass:smooth}
For any policy $\pi_\theta$, the return $R$ is a continuous random variable; and $\nabla_\theta q_\alpha(R;\pi_\theta)$, $\nabla_\theta J_\alpha(\pi_\theta)$ and $\nabla_\theta\log \pi_\theta(a)$ (for any $a$) are well defined and bounded.
\end{assumption}


\begin{lemma}[Gradient estimation bias bound]
\label{lemma:gradient_bias}
In Algorithm~\ref{algo:cesor} with a batch size $N$, consider a certain step $m\ge \rho M$, and assume that the underlying PG follows Equation~\eqref{eq:GCVaR_grad} (or Equation~\eqref{eq:GCVaR_grad_is}).
In addition, assume that for any $\phi$, the sample distribution $D_\phi$ of Algorithm \ref{algo:cesor} has the same support as the original distribution $D_{\phi_0}$.
Then, under Assumption~\ref{ass:smooth},
$E \left[ \nabla_\theta \hat{J}_\alpha \right] - \nabla_\theta J_\alpha = \mathcal{O}(N^{-1/2})$.
\end{lemma}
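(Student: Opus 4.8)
The plan is to leverage the existing $\mathcal{O}(N^{-1/2})$ bias bound that \citet{cvar_via_sampling} prove for the standard CVaR-PG estimator, and to show that neither the cross-entropy reweighting nor the soft-risk schedule degrades its order. First I would dispose of the soft risk: since $m \ge \rho M$ we are in the steady-state regime $\alpha' = \alpha$ (Line~\ref{line:alpha_prime}), so the quantity fed to the optimizer is exactly the importance-sampled gradient of Equation~\eqref{eq:GCVaR_grad_is} with no schedule-induced bias. I would then split the estimator along the two sub-batches, writing $\nabla_\theta \hat{J}_\alpha = \frac{N_o}{N_o+N_s}\,\hat g_{\mathrm{ref}} + \frac{N_s}{N_o+N_s}\,\hat g_{\mathrm{CE}}$, where $\hat g_{\mathrm{ref}}$ is the ordinary (unit-weight) CVaR-PG estimate over the $N_o$ reference trajectories drawn from $D_{\phi_0}$, and $\hat g_{\mathrm{CE}}$ is the IS-weighted estimate over the $N_s$ trajectories drawn from $D_\phi$. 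Since $N_o = \lfloor \nu N\rfloor = \Theta(N)$ and $N_s = \Theta(N)$, it suffices to bound the bias of each piece by $\mathcal{O}(N^{-1/2})$.

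The reference piece requires no new work: $\hat g_{\mathrm{ref}}$ is literally the estimator of Equation~\eqref{eq:GCVaR_grad} with the quantile baseline $\hat q_\alpha$ computed (Line~\ref{line:qref}, Equation~\eqref{eq:GCVaR_grad_is}) from the reference returns $\{R(\tau_{C_{o,i}})\}$, which are genuine i.i.d.\ samples from $P^{\pi_\theta}_{\phi_0}$. Hence the empirical-quantile concentration and the bias estimate of \citet{cvar_via_sampling} apply verbatim, giving $\mathbb{E}[\hat g_{\mathrm{ref}}] - \nabla_\theta J_\alpha = \mathcal{O}(N_o^{-1/2}) = \mathcal{O}(N^{-1/2})$. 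The key observation is that the cross entropy never touches the quantile estimate, so $\hat q_\alpha$ keeps the same statistical behaviour as in the un-modified algorithm.

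For the cross-entropy piece I would condition on the reference batch, which fixes $\hat q_\alpha$ and is independent of the $N_s$ CE trajectories. Writing $h_{\hat q}(C,\tau) = \frac{1}{\alpha}\mathbf 1_{R(\tau)\le \hat q_\alpha}(R(\tau)-\hat q_\alpha)\sum_t \nabla_\theta\log\pi_\theta(a_t;s_t)$ for the per-sample integrand and $w = D_{\phi_0}/D_\phi$ for the IS weight, the common-support assumption yields the exact change-of-measure identity $\mathbb{E}_{(C,\tau)\sim P^{\pi_\theta}_\phi}[w\, h_{\hat q}] = \mathbb{E}_{(C,\tau)\sim P^{\pi_\theta}_{\phi_0}}[h_{\hat q}]$, so that $\mathbb{E}[\hat g_{\mathrm{CE}}\mid \hat q_\alpha]$ equals the deterministic function $g(\hat q_\alpha) := \mathbb{E}_{P^{\pi_\theta}_{\phi_0}}[h_{\hat q_\alpha}]$ — precisely the conditional mean that the standard estimator would have at the same quantile value, with $g(q_\alpha)=\nabla_\theta J_\alpha$ recovered at the exact quantile by the unbiasedness established in \citet{cvar_via_sampling}. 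It then remains to control $\mathbb{E}[g(\hat q_\alpha)] - g(q_\alpha)$, which is exactly the quantile-propagation term already handled there: a Taylor expansion of $g$ around $q_\alpha$ (well-defined, bounded derivatives under Assumption~\ref{ass:smooth}, using continuity of $R$ and boundedness of $\nabla_\theta\log\pi_\theta$) together with $\mathbb{E}|\hat q_\alpha - q_\alpha| = \mathcal{O}(N^{-1/2})$ gives the claimed order.

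The main obstacle, and the one place where the new assumption does real work, is justifying the change-of-measure identity and the interchange of expectation with the random quantile in the CE term. I would emphasize that for the \emph{bias} one needs only the exact identity $\mathbb{E}_{P_\phi}[w\,h]=\mathbb{E}_{P_{\phi_0}}[h]$, which requires merely that $D_\phi$ and $D_{\phi_0}$ share support (so $P^{\pi_\theta}_{\phi_0}\ll P^{\pi_\theta}_\phi$) plus integrability of $h_{\hat q}$ under $P^{\pi_\theta}_{\phi_0}$ — the latter following from the bounded score and the tail-truncating indicator. Boundedness of the weights, which is what the variance result of Proposition~\ref{prop:variance_reduction} needs, is \emph{not} required here; this is precisely why the lemma hypothesizes only common support. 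Everything else reduces to reusing the consistency and bias estimates of \citet{cvar_via_sampling}.
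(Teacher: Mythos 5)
Your proposal is correct, and it rests on the same three pillars as the paper's proof: for $m \ge \rho M$ the schedule gives $\alpha' = \alpha$ so the soft risk introduces no bias; the quantile baseline is computed from the $N_o = \lfloor\nu N\rfloor = \Theta(N)$ reference trajectories alone, so it retains the statistical behavior of plain GCVaR; and common support makes the IS change of measure exact, so the CE sampling does not shift the mean. Where you genuinely differ is the decomposition. The paper keeps the entire batch in one expectation with weights $w_i = D_{\phi_0}(C_i)/D_{\phi_i}(C_i)$ (equal to $1$ on the reference samples), changes measure back to $\phi_0$, and then splits the \emph{integrand} into an indicator-mismatch term $(\pmb{1}_{R_i\le\hat q_\alpha} - \pmb{1}_{R_i\le q_\alpha})(R_i - \hat q_\alpha)$, bounded by $o(N^{-1/2})$ by citing \citet{cvar_sensitivity} exactly as in Theorem 4 of \citet{cvar_via_sampling}, and a quantile-error term $\pmb{1}_{R_i\le q_\alpha}(q_\alpha - \hat q_\alpha)$, bounded via $|\hat q_\alpha - q_\alpha| = \mathcal{O}((\nu N)^{-1/2})$. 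You instead split the \emph{estimator} into the convex combination $\frac{N_o}{N}\hat g_{\mathrm{ref}} + \frac{N_s}{N}\hat g_{\mathrm{CE}}$, dispatch $\hat g_{\mathrm{ref}}$ verbatim to \citet{cvar_via_sampling}, and handle $\hat g_{\mathrm{CE}}$ by conditioning on $\hat q_\alpha$ — valid precisely because the CE batch is independent of the reference batch — followed by a Lipschitz/Taylor bound on $g(q) = \alpha^{-1}\mathbb{E}_{P^{\pi_\theta}_{\phi_0}}\left[\pmb{1}_{R\le q}(R - q)\nabla_\theta\log\pi_\theta(\tau)\right]$, whose derivative $-\alpha^{-1}\mathbb{E}\left[\pmb{1}_{R\le q}\nabla_\theta\log\pi_\theta(\tau)\right]$ is bounded under Assumption~\ref{ass:smooth}. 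Your route buys a cleaner separation of concerns: the empirical-process machinery of \citet{cvar_sensitivity} is invoked only where the quantile is coupled to the samples it is computed from (the reference piece), while the CE piece needs nothing beyond elementary conditioning and quantile concentration; it also makes explicit, as you stress, that common support rather than bounded weights is what the bias bound requires — a point the paper uses but states less sharply. The paper's unified treatment, in turn, is a more direct line-by-line adaptation of Tamar et al.'s original argument and avoids the sub-batch bookkeeping. Both arguments ultimately lean on the same quantile-concentration fact $\mathbb{E}|\hat q_\alpha - q_\alpha| = \mathcal{O}(N^{-1/2})$ under the same smoothness assumptions, so the two proofs are of equal strength.
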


\begin{proof}

We follow the steps of the proof of Theorem 4 in \citet{cvar_via_sampling} with the following modifications.
First, we take the gradient expectations with respect to the CE sampling distribution $D_{\phi}$ rather than the original distribution $D_{\phi_0}$.
Second, the empirical quantile $\hat{q}_\alpha$ is calculated in Algorithm~\ref{algo:cesor} using a reduced sample size $N_o = \lfloor \nu N \rfloor < N$. Note that the estimator $\hat{q}_\alpha$ relies on samples drawn from $D_{\phi_0}$, hence is not otherwise affected by the CEM.


Denote by $D_{\phi_i}$ the distribution from which was drawn $C_i$, i.e., $\phi_i={\phi_0}$ for $i\le N_o$ and $\phi_i=\phi$ for $i>N_o$.
Since $m\ge\nu N$, according to Line~\ref{line:alpha_prime} in Algorithm~\ref{algo:cesor} we have $\alpha^\prime=\alpha$.
Denoting by $q_\alpha$ the true $\alpha$-quantile of the returns, we have
\begin{align}
    \nabla_\theta {J}_\alpha(R; \pi_\theta) = E_{\{\phi_i\}_{i=1}^N} \left[ \frac{1}{\alpha N} \sum_{i=1}^N w_i \pmb{1}_{R_i\le {q}_\alpha} \left(R_i-{q}_\alpha\right) \nabla_\theta\log \pi_\theta(\tau_i) \right]
\end{align}

We now substitute $w_i=\frac{D_{\phi_0}(C_i)}{D_{\phi_i}(C_i)}$, which is finite due to the assumption that $D_{\phi_i}$ has the same support as $D_{\phi_0}$.
Using the notation $E_{\phi_0}\left[ \cdot \right] = E_{C\sim D_{\phi_0},\, \tau\sim P_C^{\pi_\theta}}\left[ \cdot \right]$ and $\pi_\theta(\tau_i) = \Pi_t \pi_\theta(a_{i,t};s_{i,t})$, we obtain

\begin{align}
\begin{split}
    &\left| E_{C_i\sim D_{\phi_i},\,\tau_i\sim P_{C_i}^{\pi_\theta}} \left[ \nabla_\theta \hat{J}_\alpha(\{\tau_i\}; \pi_\theta) \right] - \nabla_\theta J_\alpha(\pi_\theta) \right| \\
    &\le E_{C_i\sim D_{\phi_i},\,\tau_i\sim P_{C_i}^{\pi_\theta}} \left[ \frac{1}{\alpha N} \sum_{i=1}^N \frac{D_{\phi_0}(C_i)}{D_{\phi_i}(C_i)} \left| \nabla_\theta\log \pi_\theta(\tau_i) \left( \pmb{1}_{R_i\le \hat{q}_\alpha} \left(R_i-\hat{q}_\alpha\right) - \pmb{1}_{R_i\le {q}_\alpha} \left(R_i-{q}_\alpha\right) \right) \right| \right] \\
    &= E_{\phi_0} \left[ \frac{1}{\alpha N} \sum_{i=1}^N \left| \nabla_\theta\log \pi_\theta(\tau_i) \left( \pmb{1}_{R_i\le \hat{q}_\alpha} \left(R_i-\hat{q}_\alpha\right) - \pmb{1}_{R_i\le {q}_\alpha} \left(R_i-{q}_\alpha\right) \right) \right| \right] \\
    &= E_{\phi_0} \left[ \frac{1}{\alpha N} \sum_{i=1}^N \left| \nabla_\theta\log \pi_\theta(\tau_i) \left( (\pmb{1}_{R_i\le \hat{q}_\alpha}-\pmb{1}_{R_i\le {q}_\alpha}) \left(R_i-\hat{q}_\alpha\right) + \pmb{1}_{R_i\le {q}_\alpha}(\left(R_i-\hat{q}_\alpha\right)- \left(R_i-{q}_\alpha\right) \right) \right| \right] \\
    &\le E_{\phi_0} \left[ \frac{1}{\alpha N} \sum_{i=1}^N \left| \nabla_\theta\log \pi_\theta(\tau_i) (\pmb{1}_{R_i\le \hat{q}_\alpha}-\pmb{1}_{R_i\le {q}_\alpha}) \left(R_i-\hat{q}_\alpha\right) \right| \right] \\
    &\phantom{\le\le} + E_{\phi_0} \left[ \frac{1}{\alpha N} \sum_{i=1}^N \left| \nabla_\theta\log \pi_\theta(\tau_i) \pmb{1}_{R_i\le {q}_\alpha}\left({q}_\alpha-\hat{q}_\alpha\right) \right| \right] \\
\end{split}
\end{align}

From this point, the proof is mostly identical to Theorem 4 in \citet{cvar_via_sampling}.
Namely, the first term is $o(N^{-1/2})$ according to \citet{cvar_sensitivity}, given Assumption~\ref{ass:smooth}; and since $\hat{q}_\alpha$ is estimated using $\nu N$ samples, we have $\left|{q}_\alpha-\hat{q}_\alpha\right| = \mathcal{O}((\nu N)^{-1/2}) = \mathcal{O}(N^{-1/2})$ in probability (note that $\nu$ is constant, e.g., $\nu=0.2$ or $\nu=0.5$ in the experiments of Section~\ref{sec:experiments}).
Together, the whole expression is $\mathcal{O}(N^{-1/2})$ as required.

\end{proof}





\section{The Cross Entropy Module: Extended Discussion}
\label{sec:CE_results}

The Cross Entropy Method (CEM) with non-stationary score function has a major role in CeSoR.
The CEM \code{CEM.py}{code} is implemented and available as an independent module~\citep{cem_nonstationary}.
Below we present an analysis of the CEM empirical results over both a dedicated toy problem (which tests the CEM independently of CeSoR) and as part of CeSoR in the benchmarks of Section~\ref{sec:experiments}.

\subsection{The CEM Algorithm}
\label{sec:CE_algo}

For clarity, we first provide the pseudo-code for the general CEM algorithm.
This version repeatedly generates samples from the tail of the distribution $D_{\phi_0}$. A similar version~\citep{CE_tutorial} would stop once $q_\beta\left(\{R(x_i)\}_{i=1}^N\right) \le q$ (as it means that at least $\beta N$ samples are already beyond $q$), and use all the recent samples $R(x_i)\le q$ to estimate the probability of the ``rare event'' $R(X)\le q$.

Note that unlike CeSoR, Algorithm~\ref{algo:CEM} relies on a constant mapping $R(x)$ and a constant target $q$.
Our CEM version in CeSoR, as implemented in our code and presented in Algorithm~\ref{algo:cesor}, supports a quantile-target $\alpha$ with respect to a return mapping $R$ that varies dynamically with the learning agent.

\begin{algorithm}[H]
\caption{The Cross Entropy Method for Sampling}
\label{algo:CEM}
\DontPrintSemicolon
\SetAlgoNoLine
\SetNoFillComment

 {\bf Input}: distribution $D_{\phi_0}$; score function $R$; target level $q$; batch size $N$; update selection rate $\beta$.\;
 \BlankLine
 \BlankLine
 $\phi\leftarrow {\phi_0}$\;
 
 \While{true}{
    \tcp{Sample}
    Sample $x \sim D_{\phi}^N$\;
	$w_i \leftarrow D_{{\phi_0}}(x_i) / D_{\phi}(x_i) \quad (1\le i\le N)$\;
	Print $x$\;
    \tcp{Update}
    $q^\prime \leftarrow \max\left(q,\ q_\beta\left(\{R(x_i)\}_{i=1}^N\right) \right)$\;
	$\phi \leftarrow \mathrm{argmax}_{\phi^{\prime}} \sum_{i=1}^N w_i \pmb{1}_{R(x_i)\le q^\prime} \log D_{\phi^{\prime}}(x_i)$\;
 }
\end{algorithm}

\subsection{Sample Distribution}
\label{sec:ce_sample_dist}
The goal of the CEM is to align the sample distribution with the bottom-$\alpha$ percent of the reference distribution.
Note that given a parametric family of distributions $D_\phi$ with a limited expressiveness, a perfect alignment is not always possible. For example, if the CEM controls the mean of an exponential distribution $C\sim Exp(\phi)$, and the returns decrease with $c$, then the lower quantiles of the returns correspond to $C\ge q_\alpha(C)$. However, no value of $\phi$ could eliminate the lower values $C\in\left[0, q_\alpha\right]$ -- but could merely assign more probability density to higher values.
Even when the family of distributions is expressive enough, the CEM has to learn the desired sample distribution without any prior knowledge about the meaning of the parameters that it controls. In particular, it cannot know in advance in which direction each parameter may affect the agent return, what the size of the effect would be, and how it would change during the training.

Formally, the objective of the CEM is often defined as minimization of the KL-divergence between the sample distribution and the desired tail of the reference distribution~\citep{ce_convergence_issues}.
Indeed, this objective is well-defined even if the expressiveness of $D_\phi$ does not allow a perfect alignment.

In this section, we focus on the comparison between the mean and the CVaR of the sample distribution and the reference distribution of the returns.
Specifically, while both distributions begin with the same mean and CVaR, we hope that the sample mean would align with the reference CVaR as quickly as possible.

First, we consider a toy problem with a static reference distribution and no RL environment.
The parametric family of distributions is $C\sim Beta(2\phi, 2-2\phi)$ (such that $E\left[C\right]=\phi$), and the reference distribution corresponds to ${\phi_0}=0.5$, which results in the uniform distribution $Beta(1,1)=U(0,1)$. We are interested in the bottom $\alpha=10\%$ of the reference distribution, i.e., $U(0,0.1)$.
We run the CEM for $n=10$ steps with $N=1000$ samples per step, $\nu=20\%$ of them are drawn from the original reference distribution, and update $\phi$ using the mean of the lower $\beta=50\%$ samples. Note that generally in this work, $C$ is the context or configuration of an environment that produces returns; in this toy example, we do not have an RL environment and we simply define $R(C)=C$.

\begin{figure}[!ht]
\centering
  \includegraphics[width=.4\linewidth]{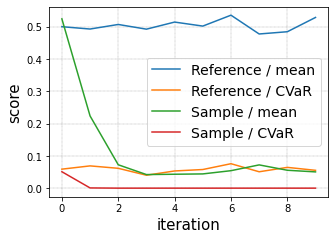}
\caption{\small The converges of the CE sample mean to the reference $CVaR_{10\%}$ in the toy $Beta$ distribution problem.}
\label{fig:ce_demo}
\end{figure}

The $CVaR_{10\%}$ of $C$ (or equivalently, the mean of $U(0,0.1)$) is $0.05$. Note that no value of $\phi$ can yield the distribution $U(0,0.1)$, as the support of the $Beta$ distribution is always $(0,1)$.
Yet, as shown in Figure~\ref{fig:ce_demo}, the sample mean converges to the reference CVaR within mere 2 iterations, and remains around this level.

Figure~\ref{fig:ce_scores} presents the same metrics for the experiments described in Section~\ref{sec:experiments}. In these cases, the reference returns distribution corresponds to the agent returns under the original environment. Note that this reference returns distribution is dynamic during the training, as it changes with the agent (and in certain benchmarks also with the episode length that increases throughout the training).
Yet, in the Driving Game benchmark, for example, we see that the sample mean reasonably aligns with the reference CVaR throughout most of the training, even as both of them vary.

\begin{figure}[!ht]
\centering
\begin{subfigure}{.33\textwidth}
  \centering
  \includegraphics[width=1.\linewidth]{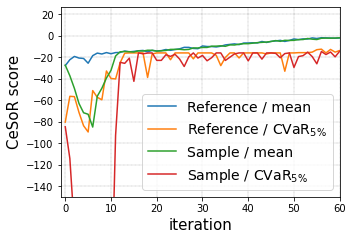}
  \caption{Guarded Maze (first 60 iterations)}
  \label{fig:ce_maze}
\end{subfigure}
\begin{subfigure}{.33\textwidth}
  \centering
  \includegraphics[width=1.\linewidth]{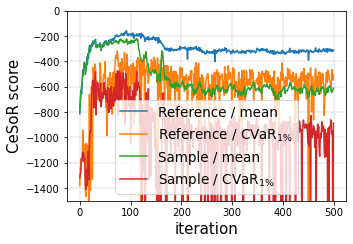}
  \caption{Driving Game}
  \label{fig:ce_driving}
\end{subfigure}
\begin{subfigure}{.32\textwidth}
  \centering
  \includegraphics[width=1.\linewidth]{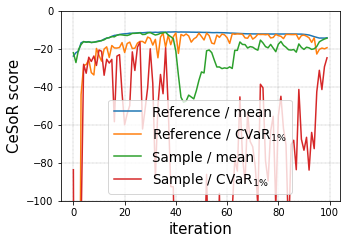}
  \caption{Servers Allocation}
  \label{fig:ce_servers}
\end{subfigure}
\caption{\small The mean and CVaR metrics of the CE sample distribution and the reference original distribution throughout the training of CeSoR over different benchmarks.}
\label{fig:ce_scores}
\end{figure}

In the Guarded Maze, the sample mean also quickly converges into the reference CVaR. However, once the agent learns to avoid the short path, the CE sampler can no longer control the agent performance at all, and due to the regularizing reference samples, the sample distribution gradually goes back to the original one. This is a valid behavior, as the agent already learned to avoid the risk, and if for some reason it came back to the risky short path, the CE would simply learn again to focus on the risky configurations of the environment.

The Servers Allocation Problem takes the challenge of the CEM to the limit, as the target is $\alpha=1\%$, the difficulty to the agent arrives in a non-smooth manner as rare and discrete events, and the given family of distributions (Binomial) has limitations in expressing the desired distribution.
Specifically, we would like most of the sample episodes to include a peak event, but not more than one; whereas the Binomial distribution is not best-suitable for this.
However, even as the CEM struggles to fit the reference $CVaR_{1\%}$ (Figure~\ref{fig:ce_servers}), CeSoR is still shown to provide beneficial results (Section~\ref{sec:servers}, Appendix~\ref{sec:servers_detailed}).
This demonstrates the robustness of CeSoR to limitations and misspecification of the modeled family of distributions.

\paragraph{Sensitivity to $\beta$:}
As discussed in Sections \ref{sec:preliminaries} and \ref{sec:method}, the smoothness parameter $\beta$ determines the minimal percent of data samples used for the update step in the CEM.
We argue that CeSoR has a low sensitivity to the parameter $\beta$.

Intuitively, every iteration of the CEM focuses on the $\beta$-tail of the previous iteration (until reaching the $\alpha$-tail of the reference distribution).
Theoretical analysis of the convergence rate is challenging, due to the limited expressiveness of $D_\phi$ and the non-stationary agent returns; yet, according to the qualitative intuition above, we expect exponential convergence to the tail, which applies even for high values of $\beta$.
On the other hand, while low values of $\beta$ may increase the noise in the update step of the CEM, any noisy update could be corrected throughout the training. Note that Algorithm~\ref{algo:cesor} uses the original context-distribution for a certain part of the samples of each batch; this guarantees that any update step is reversible, as CeSoR continues to be exposed to the complete context-space.

Empirically, we repeated the experiments of Section~\ref{sec:experiments} with various values of $\beta\in[0.05,0.3]$. In the Guarded Maze and the Driving Game, all the values of $\beta$ resulted in similar test returns; in addition, Figure~\ref{fig:beta} shows that the CEM successfully aligned the sample mean return with the reference CVaR, independently of $\beta$.
The Servers Allocation Problem is more challenging for the CEM (as discussed above), making the sampler more sensitive to the parameter $\beta$, and in particular leading to a failure for $\beta=0.3$. However, note that even under such a combination of poor algorithmic choices (Binomial parameterization of $D_\phi$ and very high $\beta$), the failure of the CEM is easy to notice through Figure~\ref{fig:beta_servers} (as the sample-mean fails to deviate from the reference-mean), and is easy to fix.


\begin{figure}[!ht]
\centering
\begin{subfigure}{1.\textwidth}
    \centering
    \includegraphics[width=1.\linewidth]{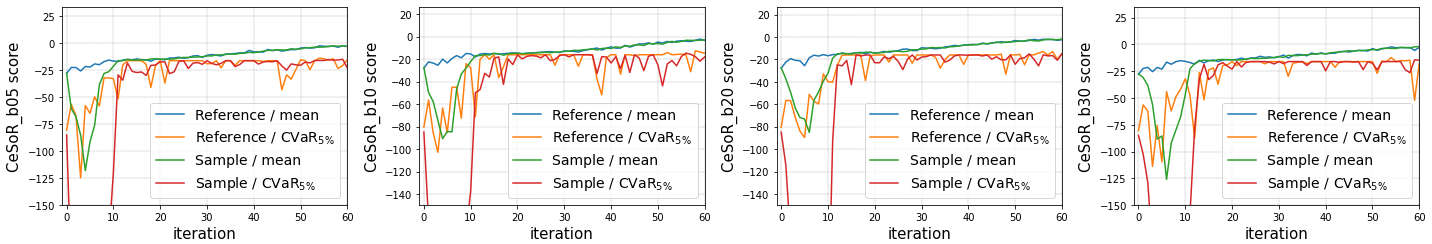}
    \caption{Guarded Maze}
\end{subfigure}
\begin{subfigure}{1.\textwidth}
    \centering
    \includegraphics[width=1.\linewidth]{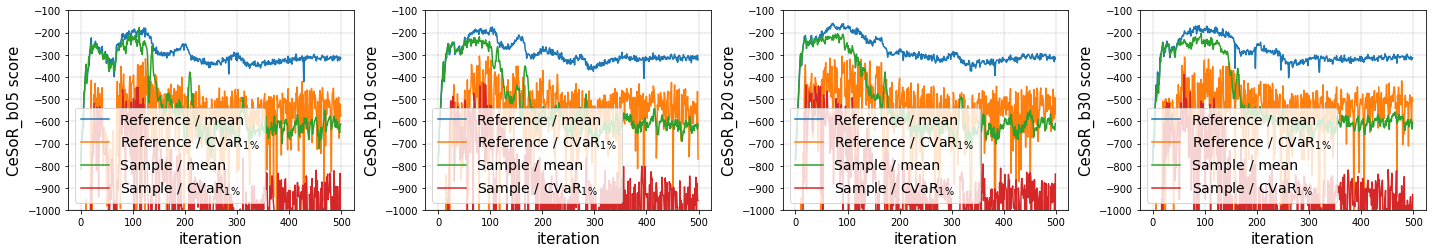}
    \caption{Driving Game}
\end{subfigure}
\begin{subfigure}{1.\textwidth}
    \centering
    \includegraphics[width=1.\linewidth]{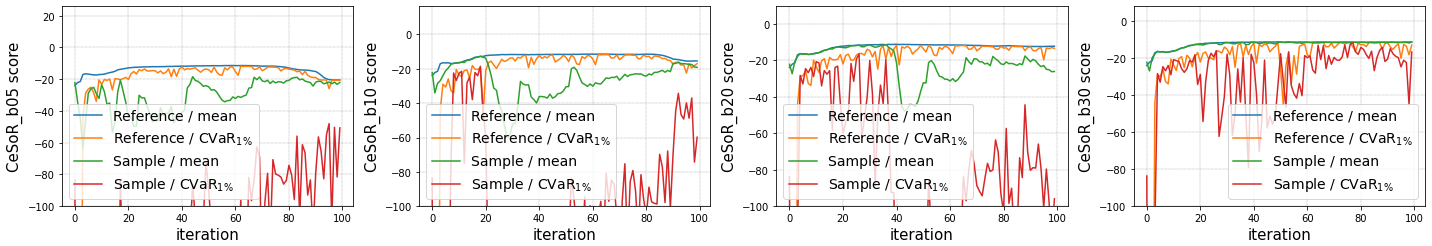}
    \caption{Servers Allocation}
    \label{fig:beta_servers}
\end{subfigure}
\caption{\small Returns statistics of the sample distribution and the reference original distribution, throughout the training of CeSoR over different benchmarks, for different values of $\beta\in[0.05,0.3]$.}
\label{fig:beta}
\end{figure}

\FloatBarrier


\subsection{Sample Efficiency}
\label{sec:ce_sample_efficiency}

An important aspect of the CEM is its increase of sample efficiency (Section~\ref{sec:variance_reduction}).
While the results in Section~\ref{sec:experiments} already demonstrate that CeSoR learns better and faster than the standard GCVaR, here we measure the effective sample size directly.
While PG always uses the entire batch, and GCVaR always uses at most $\alpha$ of the episodes, Figure~\ref{fig:sample_size} shows that CeSoR manages to optimize CVaR$_\alpha$ while using more than $\alpha$ percent of the data. Note that even beyond the risk level scheduling (which ends after $\rho=80\%$ of the training), the CEM still allows for more than $\alpha$ percent of each batch to be used.

\begin{figure}[!ht]
\centering
\begin{subfigure}{.8\textwidth}
  \centering
  \includegraphics[width=1.\linewidth]{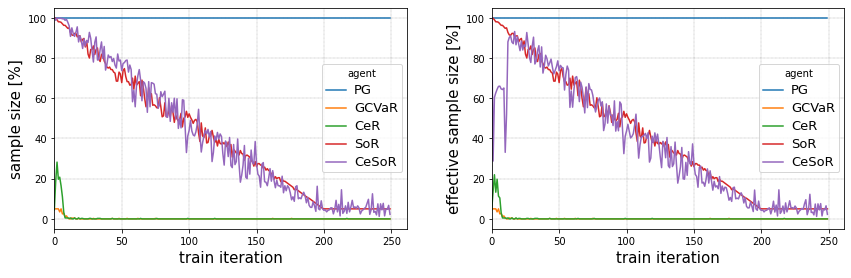}
  \caption{Guarded Maze}
  \label{fig:maze_sample_size}
\end{subfigure} \\
\begin{subfigure}{.8\textwidth}
  \centering
  \includegraphics[width=1.\linewidth]{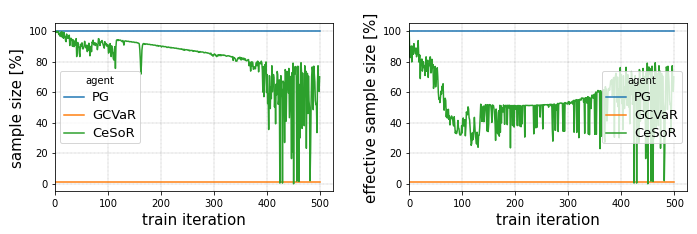}
  \caption{Driving Game}
  \label{fig:driving_sample_size}
\end{subfigure} \\
\begin{subfigure}{.8\textwidth}
  \centering
  \includegraphics[width=1.\linewidth]{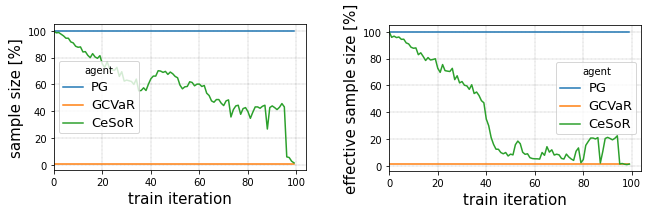}
  \caption{Servers Allocation}
  \label{fig:servers_sample_size}
\end{subfigure}
\caption{\small Left -- sample size: the percent of episode samples (out of $N=400$ episodes per training iteration) used by the optimizer. Note that only returns $R(\tau_i)<q_\alpha$ are counted (strict inequality), since the contribution of episodes with $R(\tau_i)=q_\alpha$ to the loss is $0$ (Equation~\eqref{eq:GCVaR_grad}). Right -- \textit{effective} sample size: this takes into account the IS weights: the effective sample size equals the number of equally-weighted independent samples needed to obtain the same estimation variance~\citep{n_eff_book,n_eff_blog}: $n_{eff}=(\sum_iw_i)^2/\sum_iw_i^2$. Note that for equal weights, $n_{eff}=n$.}
\label{fig:sample_size}
\end{figure}

Note that GCVaR effectively uses \textit{less} than $\alpha$ episodes in a batch if multiple episodes $\{\tau_i\}$ satisfy $R(\tau_i)=q_\alpha$ -- since the contribution of any such episode to the gradient in Equation~\eqref{eq:GCVaR_grad} is $0$.
In the extreme case, as discussed in in Section~\ref{sec:analysis_blindness} and Appendix~\ref{sec:maze_detailed_results}, all the worst $\alpha$ episodes are identical, and the whole loss gradient is identically $0$.

\FloatBarrier


\subsection{Risk Characterization}

The CEM not only allows CeSoR to sample the most relevant environment conditions for CVaR optimization, but also allows us to characterize the conditions that correspond to the risk level $\alpha$. This enhances our understanding of the problem and may help us to anticipate poor returns in advance.

\begin{figure}[!ht]
\centering
\begin{subfigure}{.67\textwidth}
  \centering
  \includegraphics[width=1.\linewidth]{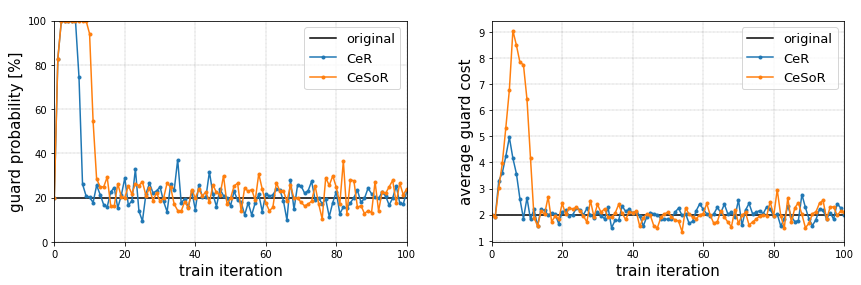}
  \caption{Guarded Maze}
  \label{fig:ce_maze_dist}
\end{subfigure}
\begin{subfigure}{.32\textwidth}
  \centering
  \includegraphics[width=1.\linewidth]{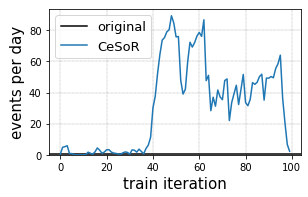}
  \caption{Servers Allocation}
  \label{fig:ce_servers_dist}
\end{subfigure}
\begin{subfigure}{.32\textwidth}
  \centering
  \includegraphics[width=1.\linewidth]{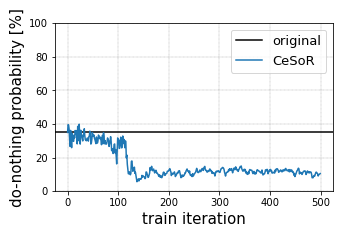}
  \caption{Driving}
  \label{fig:ce_driving_p_nothing}
\end{subfigure}
\begin{subfigure}{.32\textwidth}
  \centering
  \includegraphics[width=1.\linewidth]{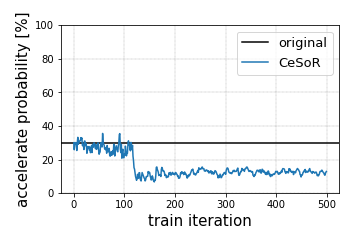}
  \caption{Driving}
  \label{fig:ce_driving_p_acc}
\end{subfigure}
\begin{subfigure}{.32\textwidth}
  \centering
  \includegraphics[width=1.\linewidth]{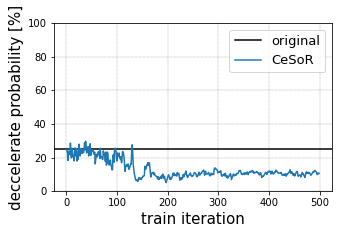}
  \caption{Driving}
  \label{fig:ce_driving_p_dec}
\end{subfigure}
\begin{subfigure}{.32\textwidth}
  \centering
  \includegraphics[width=1.\linewidth]{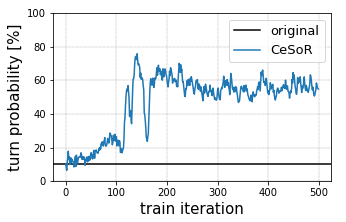}
  \caption{Driving}
  \label{fig:ce_driving_p_turns}
\end{subfigure}
\begin{subfigure}{.32\textwidth}
  \centering
  \includegraphics[width=1.\linewidth]{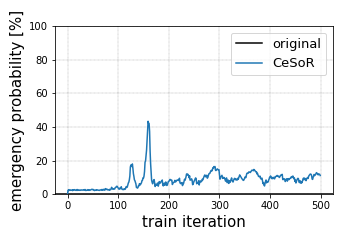}
  \caption{Driving}
  \label{fig:ce_driving_p_emergency}
\end{subfigure}
\caption{\small The evolution of the CE distribution parameters $\phi^\prime$ throughout the training in various benchmarks.}
\label{fig:ce_dist}
\end{figure}

Figure~\ref{fig:ce_dist} presents the evolution of the sample distribution parameters $\phi$ throughout the CeSoR training process in the various benchmarks.
In the Guarded Maze, for example, $\phi$ goes back to its original values once the agent behavior converges, which teaches us that a risk-averse agent can be entirely insensitive to the environment conditions.
In the Driving Game, on the other hand, the agent must still beware a leader that applies many turns and emergency brakes. Furthermore, the CEM provides the connection between the risk level of interest ($\alpha$) and the corresponding values of $\phi$ (e.g., how many turns and brakes it takes to bring us to this risk level).

\FloatBarrier


\section{The Guarded Maze: Extended Discussion}
\label{sec:maze_detailed}

\subsection{Implementation Details}
\label{sec:maze_implementation}

In this section we specify the implementation details of the Guarded Maze.
The full code is available in the \code{Examples/GuardedMaze/GuardedMaze.py}{gym environment} and the corresponding \code{Examples/GuardedMaze/GuardedMazeExample.ipynb}{jupyter notebook}.

\paragraph{The Guarded Maze benchmark:}
The benchmark introduces a maze of size $8\times 8$, with the walls marked in gray in Figure~\ref{fig:maze_nice}.
The target is a $1\times1$ square marked in green.
Every episode, the initial agent location is drawn from a uniform distribution over the lower-left quarter of the maze.
Every time step, the agent can walk in one of the directions left, right, up and down, with a step size of $1$, and an additive normally-distributed noise with standard deviation of $0.2$ in each dimension. That is,
$$ s_{t+1} = s_t + a_t + (\epsilon_1,\epsilon_2)^\top $$
where $s_t,a_t\in\mathbb{R}^2$ and $\epsilon_i\sim \mathcal{N}(0,0.2^2)\ (i\in\{1,2\})$.
A step that ends in a wall is cancelled, and the agent remains in its place.

Every time-step, the agent observes its location $s_t$.
In practice, we use a soft (continuous) one-hot encoding of the agent location in the maze, calculated as a 2D interpolation between the 4 nearest points of a $8\times8$ grid, represented as a corresponding $8\times8$ matrix.
That is, if the agent is located between the grid points $(i,j),(i,j+1),(i+1,j),(i+1,j+1)$, then all the other elements of the matrix are set to $0$, and these 4 elements are assigned positive value that are summarized to $1$, according to the relative location of the agent between them.
Note that the locations of the target and the guarded zone are constant, and are not given as input.

An episode ends either when reaching the target or after $160$ time-steps.
The rewards are specified in Section~\ref{sec:maze}.
The return of an episode is the sum of its rewards (i.e., no discount factor).
The maze is designed such that the $mean$-optimal strategy is taking the shortest path to the target, where the expected cost of crossing the guarded zone is $E\left[C_1C_2\right]=\phi_1\phi_2=0.2\cdot 32=6.4$ -- smaller than the additional cost of the longer path.
The $CVaR_{0.05}$-optimal strategy, however, is to take the longer path, since sometimes short cuts make long delays~\citep{tolkien}.

\paragraph{Algorithms implementation:}
The training algorithms are specified in Section~\ref{sec:experiments}.
In the maze benchmark, all of them are applied to a linear model that takes as an input the one-hot encoding described above ($\in\mathbb{R}^{64}$), and is followed by a softmax operator with temperature $T$.
That is, $P(a_j;\theta) = exp(Ty_j) / \sum_{j'} exp(Ty_{j'})$ (where $1\le j\le4$ and $y_j$ is the corresponding output of the linear model $F_\theta$).
We set a constant $T=1$ over the whole training, and $T=0$ (i.e., choosing the max-probability action) for validation and test episodes.


The CE module in CeR and CeSoR controls the parameters $\phi$ of the Bernoulli and the Exponential distributions.
Note that the module is aware of the original ("true") values of $\phi$, but not of their semantic meaning in the maze (e.g., it is not aware that high values are "bad", or that they only affect the agent through the guarded zone).
The sample parameters update using the moments-method is as simple as $\phi \leftarrow (mean(C_1), mean(C_2))$, calculated over the episodes selected by the CE (Line~\ref{line:update} in Algorithm~\ref{algo:cesor}).



\subsection{Detailed Results}
\label{sec:maze_detailed_results}

Figure~\ref{fig:maze_test_scores_full} shows the distribution of the trained agent returns over the test episodes in the Guarded Maze (note that the left tail of this distribution is displayed in Figure~\ref{fig:maze_test_scores}.
Figure~\ref{fig:maze_train} shows the mean and CVaR of the training and validation scores throughout the training process.
Below we elaborate on the training dynamics in general, and the blindness to success in particular.

\begin{figure}[!ht]
\centering
  \includegraphics[width=.4\linewidth]{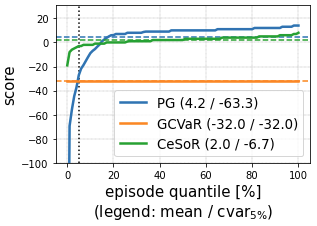}
\caption{\small The full distribution of the trained agent returns over the test episodes in the Guarded Maze. Note that Figure~\ref{fig:maze_test_scores} displays the left tail of the same distribution.}
\label{fig:maze_test_scores_full}
\end{figure}

\begin{figure}[!ht]
\centering
  \includegraphics[width=1.\linewidth]{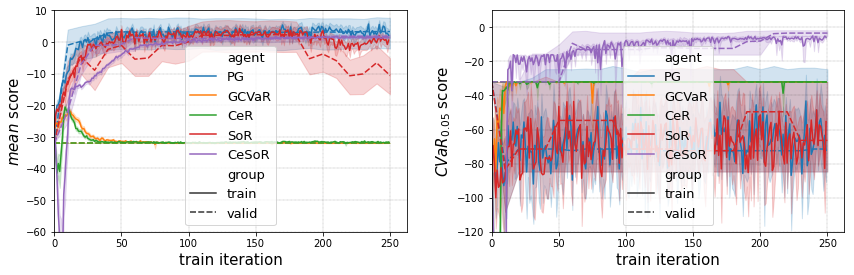}
\caption{\small Mean and CVaR scores over the train and validation episodes throughout the Guarded Maze training. The shading corresponds to 95\% confidence-intervals, based on bootstrapping over the episode-samples. Note that validation and train policies are not entirely identical, as the former deterministically chooses the action of max-probability (temperature $T=0$), and the latter operates stochastically ($T=1$).}
\label{fig:maze_train}
\end{figure}

\FloatBarrier

\paragraph{Blindness to success:}
Section~\ref{sec:ce_sample_efficiency} discusses the contribution of the \textit{CE sampling} to the sample efficiency. Here we discuss the contribution of \textit{soft risk level scheduling} to the sample efficiency, and in particular its prevention of \textit{blindness to success}.

\begin{figure}[!ht]
\centering
\begin{subfigure}{.33\textwidth}
  \centering
  \includegraphics[width=1.\linewidth]{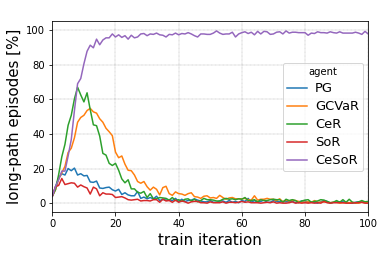}
  \caption{}
  \label{fig:maze_longs_tot}
\end{subfigure}
\begin{subfigure}{.35\textwidth}
  \centering
  \includegraphics[width=1.\linewidth]{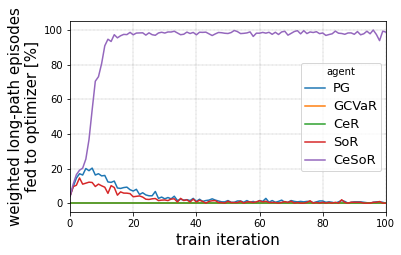}
  \caption{}
  \label{fig:maze_longs_fed}
\end{subfigure}
\begin{subfigure}{.3\textwidth}
  \centering
  \includegraphics[width=1.\linewidth]{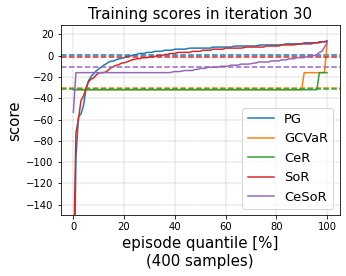}
  \caption{}
  \label{fig:maze_scores_30}
\end{subfigure}
\begin{subfigure}{1\textwidth}
  \centering
  \includegraphics[width=1.\linewidth]{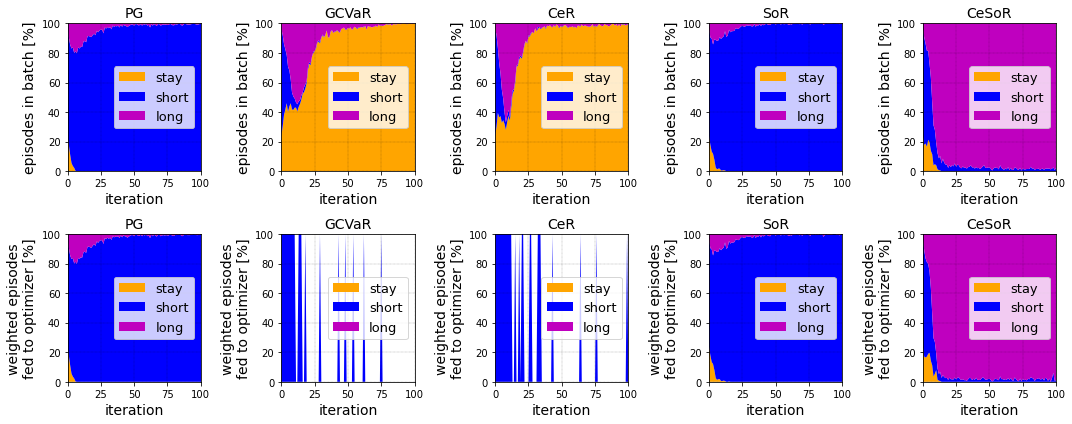}
  \caption{}
  \label{fig:maze_exposure_full}
\end{subfigure}
\caption{\small For the first 100 iterations of the Guarded Maze training, (a) the percent of episodes that reached the target through the long path; (b) the total weight of such long-path episodes that were fed to the optimizer (out of the total weight of episodes fed to the optimizer); (c) the returns distribution over the 30th training batch; and (d) percent of episodes (top) and total weight (bottom) for all 3 agent strategies (not only long path as in (a),(b)).}
\label{fig:blindness_to_success}
\end{figure}

As displayed in Figure~\ref{fig:maze_longs_tot}, for all the agents in the beginning of the optimization process, around 10\% of the episodes in every batch reach the target through the long path. At the same time, around 70\% of the episodes reach the target through the short (and risky) path.
As a risk-averse algorithm, GCVaR learns to avoid the short path, and the ratio of the long-path episodes increases accordingly -- reaching up to 50\% around the 15th batch (recall that in training episodes the actions are selected randomly according to the policy softmax output with temperature 1, which allows the agent to randomly reach the target).
Nonetheless, as shown in Figure~\ref{fig:maze_longs_fed}, in \textit{all} of the train iterations, \textit{none} of the long-path episodes belong to the bottom $\alpha=5\%$ episodes (which are fed to the optimizer), hence GCVaR never learns to prefer the long-path.
This demonstrates the blindness of GCVaR to the successful long path.


In fact, after around 10 training iterations of GCVaR, all the bottom $\alpha=5\%$ episodes in most batches already follow the stay-strategy (i.e., do not reach the target, nor take the guarded-zone risk), and achieve a constant return of $-32$ (Figure~\ref{fig:maze_scores_30}). Note that according to Equation~\eqref{eq:GCVaR_grad}, this means that the loss gradient is identically $0$.
As shown in Figure~\ref{fig:maze_sample_size}, the used sample size of GCVaR is indeed 0 after the 10th iteration, the effective sample efficiency is 0, and most of the changes in the agent from this point are attributed to the remaining Adam gradient momentum.

The soft risk level scheduling eliminates the blindness to success, and allows the optimizer to observe the long-path episodes (SoR in Figure~\ref{fig:maze_longs_fed}). However, at the same time, it reduces the risk-aversion of the agent, and the long path is no longer preferred over the short path. When the risk level reduces sufficiently, the agent may re-learn to avoid the short path, but the long path is no longer sampled at all and cannot be learned.

Only CeSoR manages both to observe the long-path episodes (thanks to soft risk level scheduling) \textit{and} to prefer them over the short path (thanks to the risk-aversion induced by the CEM).

\FloatBarrier

\paragraph{Examples and visualization:}
Figure~\ref{fig:maze_policies} visualizes the policies learned by PG, GCVaR and CeSoR.
While the policies are defined over all the continuous state space, the visualization is restricted to a discrete grid.
Note that CeSoR and GCVaR behave similarly in the lower-left part of the maze, corresponding to guarded-zone avoidance; however, since GCVaR never observed the long path and learned its benefits, it fails to learn the CVaR-optimal strategy in the upper part of the maze.

\begin{figure}[!ht]
\centering
\begin{subfigure}{.25\textwidth}
  \centering
  \includegraphics[width=1.\linewidth]{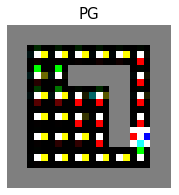}
  \caption{}
  \label{fig:maze_pg}
\end{subfigure}
\begin{subfigure}{.25\textwidth}
  \centering
  \includegraphics[width=1.\linewidth]{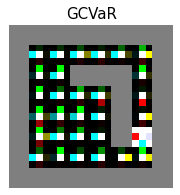}
  \caption{}
  \label{fig:maze_gcvar}
\end{subfigure}
\begin{subfigure}{.25\textwidth}
  \centering
  \includegraphics[width=1.\linewidth]{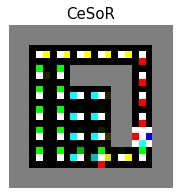}
  \caption{}
  \label{fig:maze_cesor}
\end{subfigure}
\caption{\small The policies learned by PG, GCVaR and CeSoR, visualized over a discrete grid within the continuous state space of the Guarded Maze. The colors brightness around each point in the grid corresponds to the probabilities assigned to the actions by the policy given this point.}
\label{fig:maze_policies}
\end{figure}

Figure~\ref{fig:maze_examples} shows a sample of test episodes for each of the trained agents.
Due to the reduced risk-aversion of SoR (as discussed above), its best validation CVaR score was obtained early in the training, which may explain its non-smooth behavior in Figure~\ref{fig:maze_examples}.

\begin{figure}[!ht]
\centering
  \includegraphics[width=.8\linewidth]{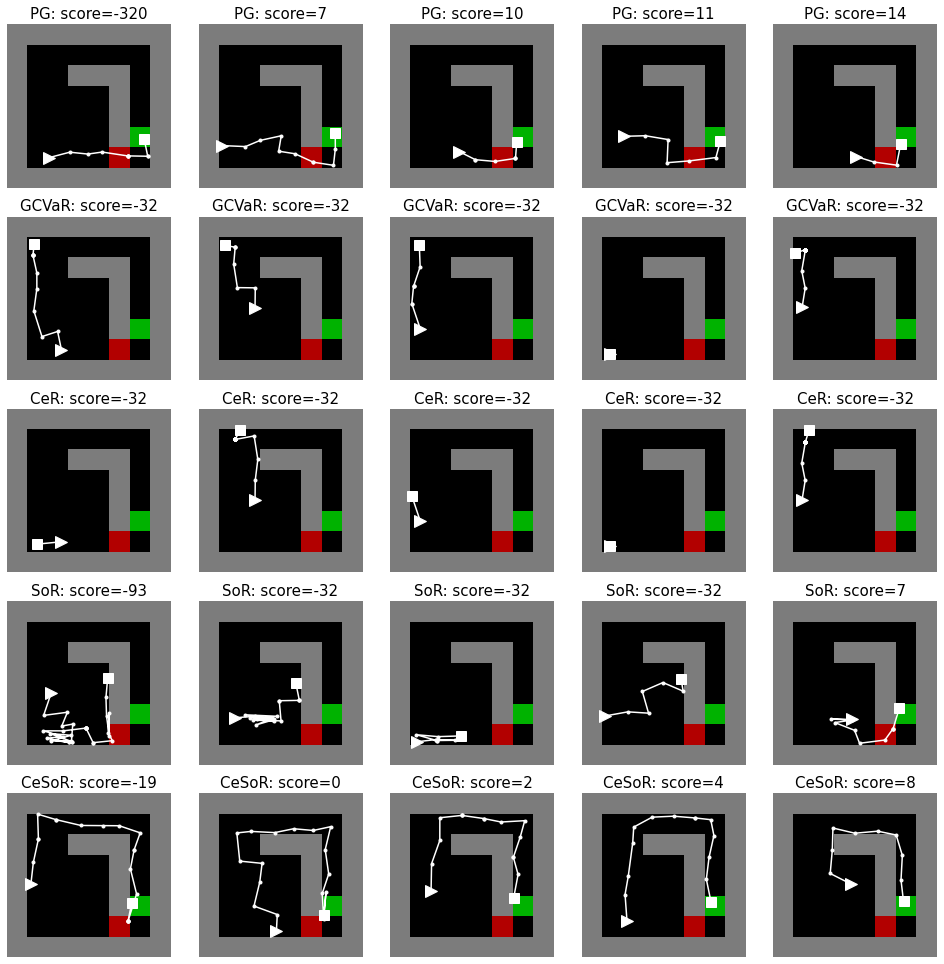}
\caption{\small A sample of test episodes for each of the trained agents in the Guarded Maze.}
\label{fig:maze_examples}
\end{figure}

\FloatBarrier


\section{The Driving Game: Extended Discussion}
\label{sec:driver_detailed}

\subsection{Implementation Details}
\label{sec:driver_implementation}

In this section we specify the implementation details of the Driving Game.
The full code is available in the \code{Examples/DrivingGame/DrivingSim.py}{gym environment} and the corresponding \code{Examples/DrivingGame/DrivingExample.ipynb}{jupyter notebook}.
Note that the leader behavior generation mechanism and the policy architecture are already specified in Section~\ref{sec:experiments}.

\textbf{Observation space}: the policy receives the following variables as inputs: relative position $dx,dy$, relative on-track velocity $dvx$, agent acceleration $ax$ and agent direction $\theta$.

\textbf{Action space}: the possible agent actions are (1) keep speed and steer; (2) accelerate; (3) decelerate; (4) steer left; (5) steer right.
The acceleration and deceleration magnitudes ($+4m/s^2, -6m/s^2$) were determined according to the typical acceleration value described in \citet{driving2018}.

\textbf{Rewards}: we use the rewards defined in \citet{driving2018}, with the parameters $r_1=0.5, r_2=0.05, r_3=0.1, r_4=0.5, r_5=1, r_6=0.5$.
These parameters determine the scale of the 6 additive rewards of \citet{driving2018}, which correspond to staying behind the leader, staying close to the leader, keeping similar speed to the leader, keeping smooth agent acceleration, staying in the same lane as the leader, and staying on-road, respectively.
We also add a new additive reward of size $5$ for any time-step with overlap between the agent and leader cars, meant to penalize collisions -- which are not explicitly expressed in the original rewards.


\subsection{Detailed Results}
\label{sec:driver_detailed_results}

Figures~\ref{fig:driving_test_scores_full}-\ref{fig:driving_frames} present a detailed analysis of the results of the Driving Game experiments.

\begin{figure}[!ht]
\centering
  \includegraphics[width=.4\linewidth]{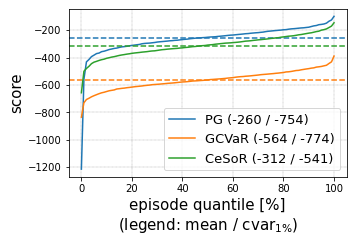}
\caption{\small The full distribution of the trained agent returns over the test episodes in the Driving Game. Note that Figure~\ref{fig:driving_test_scores} displays the left tail of the same distribution.}
\label{fig:driving_test_scores_full}
\end{figure}

\begin{figure}[!ht]
\centering
\begin{subfigure}{.4\textwidth}
  \centering
  \includegraphics[width=1.\linewidth]{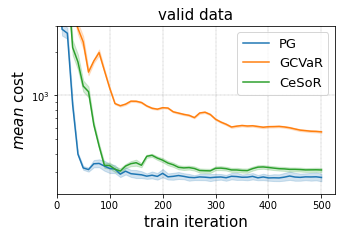}
  \caption{$Mean$}
  \label{fig:driving_valid_mean}
\end{subfigure}
\begin{subfigure}{.4\textwidth}
  \centering
  \includegraphics[width=1.\linewidth]{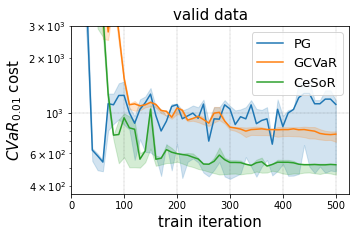}
  \caption{$CVaR_{1\%}$}
  \label{fig:driving_valid_cvar}
\end{subfigure}
\caption{\small Mean and CVaR scores over the validation episodes throughout the Driving Game training. The shading corresponds to 95\% confidence-intervals, based on bootstrapping over the episode-samples.}
\label{fig:driving_valid}
\end{figure}

\begin{figure}[!ht]
\centering
\begin{subfigure}{.25\textwidth}
  \centering
  \includegraphics[width=1.\linewidth]{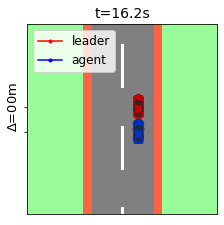}
  \caption{}
  \label{fig:driving_pg_frame}
\end{subfigure}
\begin{subfigure}{.25\textwidth}
  \centering
  \includegraphics[width=1.\linewidth]{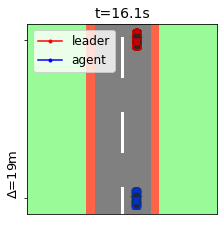}
  \caption{}
  \label{fig:driving_gcvar_frame}
\end{subfigure}
\begin{subfigure}{.25\textwidth}
  \centering
  \includegraphics[width=1.\linewidth]{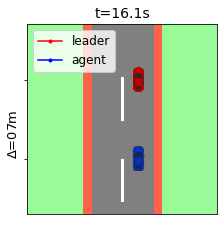}
  \caption{}
  \label{fig:driving_cesor_frame}
\end{subfigure}
\begin{subfigure}{.4\textwidth}
  \centering
  \includegraphics[width=1.\linewidth]{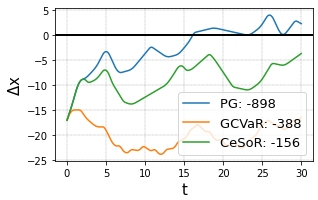}
  \caption{}
  \label{fig:driving_example_dx}
\end{subfigure}
\caption{\small (a-c) A sample frame in a test episode in the Driving Game. All the agents deal with the same situation (the same sequence of leader actions, which happened to include a sequence of decelerations).
While PG collides with the leader, CeSoR keeps a safe margin -- without losing as much distance as GCVaR.
Note that Figure~\ref{fig:driving_nice} effectively displays these 3 frames together.
(d) The agent-leader distance evolution in the whole episode, and the final episode score of each agent.}
\label{fig:driving_frames}
\end{figure}

\begin{figure}[!ht]
\centering
  \includegraphics[width=.9\linewidth]{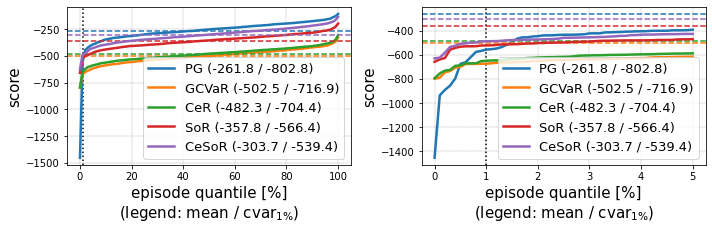}
\caption{\small Additional ablation tests for the Driving Game: the full returns distributions (left) and zoom-in to their tails (right). Note that we reran the experiment for the ablation test, resulting in slightly different returns than Figure~\ref{fig:awesome}. Both CeR and SoR lose to CeSoR in terms of CVaR and mean, indicating the necessity of both soft risk and CE-sampler in CeSoR.}
\label{fig:driving_ablation}
\end{figure}

\FloatBarrier


\section{The Computational Resource Allocation Problem: Extended Discussion}
\label{sec:servers_detailed}

\subsection{Implementation Details}
\label{sec:servers_implementation}

In this section we specify the implementation details of the Resource Allocation Problem presented in Section~\ref{sec:servers}.
The full code is available in the \code{Examples/ServersAllocation/ServersSim.py}{gym environment} and the corresponding \code{Examples/ServersAllocation/ServersExample.ipynb}{jupyter notebook}.

The benchmark simulates one-hour episodes, where user-requests arrive randomly and the agent is responsible to allocate sufficiently many servers to handle them.
Once a request is attended, its service time is distributed exponentially with an average of 1 second.
Every second $t$, the number of arrivals is distributed $\sim Exp(\lambda_t)$, where the arrival rate $\lambda_t$ is itself an exponential moving average (EMA) of the (unknown) users interest $r_t$, with a typical decay of 5 minutes (i.e., $\lambda_t=\frac{299}{5\cdot60}\lambda_{t-1}+\frac{1}{5\cdot60}r_t$). $r_t=3$ is usually constant, but an unpredictable event causes a peak load every second with probability $\phi_0=\frac{1}{3\cdot24\cdot3600}$, i.e., every 3 days (or 72 episodes) on average.
In case of a peak load we set the momentary user interest to $r_t=3\cdot 300$, which means that the arrival rate doubles immediately to $\lambda_t = \frac{299}{300}\lambda_{t-1} + \frac{1}{300}r_t = \frac{299}{300}3 + \frac{1}{300}3\cdot300 \approx 6$, and then starts decreasing exponentially back to 3, with a typical decay of 5 minutes.


Every minute, the agent observes the number of active servers $3\le n^s\le10$ (initialized every episode to $n^s=4$) and the number of pending user-requests in the system, and may choose to add or remove one server (or to keep the number of servers as before).
Uploading a new server takes a 2-minute delay before the server is ready to handle requests. Removing a busy server takes effect once the server ends its current task.
Note that the servers form an ordered list, and only the last server in the list can be directly removed.
This constraint has little significance, since (1) the queue of pending requests is a global FIFO queue (i.e., the assignment only happens when a server becomes available -- there is no separate queue per server); (2) the requests serving time is exponentially distributed, i.e., the remaining time of the current task is independent of the task history and thus is identical for all the busy servers at any point of time.

Denoting by $tts_i$ the Time-To-Service (TTS) latency of a request, the agent return is
$$ R=-\text{user cost} -\text{servers cost}=-\sum_{i\in\text{requests}}tts_i-2\sum_{t=1}^{3600} n^s_t .$$
Once a request is assigned to a server, its serving time$\,\sim Exp(1)$ is independent of the agent decisions. Thus, to simplify computations and to reduce the noise, we measure the TTS of a request only as the waiting time between arrival and beginning of serving.

We set a target risk level of $\alpha=0.01$, and train each agent for $n=100$ steps.
During the training, we gradually increase the episodes length $L$ from 15 to 60 seconds.
The CEM controls the peak events frequency $\phi$, or equivalently, the number of peaks per episode (which is distributed $\sim Binom(\phi, L)$). The update function of $\phi$ is simply the (weighted) average number of peaks per selected episode, divided by the episode length.
$\nu=50\%$ of the episodes per batch are drawn from the original distribution $D_{\phi_0}$.

Note that at times of no peak-loads, the arrival rate is $\lambda=3$ and the service rate equals the number of servers $n^s$ (since the service takes 1 second on average). Thus, in terms of queueing theory, any number of servers $n^s\ge4$ guarantees that the expected number of requests in the system is $E\left[n^r\right]=3/(n^s-3)\le 3$.
In particular, this means that the policy learned by PG (see Section~\ref{sec:servers}) chooses the minimal number of servers $n^s=4$ that can handle no-peak demand, and adds resources only when required.

The agent policy receives a 9-dimensional vector as an input. The first 8 elements correspond to a one-hot encoding of the current number of paid servers $3\le n^s \le 10$ (including new servers that are not finished uploading yet). The last element corresponds to the current number of pending user requests in the queue, divided by $10r=30$ (the average number of arriving requests in 10 seconds of no peak-load).



\subsection{Detailed Results}
\label{sec:servers_detailed_results}

Figure~\ref{fig:servers_test_scores} summarizes the test scores of the agents, where CeSoR presents a reduction of 44\% and 17\% in the CVaR cost in comparison to PG and GCVaR, respectively.
In addition, its average cost is only 7\% higher than PG, and 33\% lower than GCVaR.
That is, CeSoR significantly improves the CVaR return without as a large compromise to the mean as in GCVaR.
CeSoR also outperforms GCVaR in episodes both with and without peak events, as shown in Figure~\ref{fig:servers_scores_by_events} below.
As demonstrated in Figure~\ref{fig:servers_nice} and summarized in Figure~\ref{fig:servers_policy}, PG and CeSoR learned to allocate a default of 4 and 5 servers, respectively, and to react to peak loads as needed; whereas GCVaR simply allocates 8 servers at all times.

Note that the CE task -- sampling the bottom $\alpha=1\%$ -- is particularly challenging in this problem, due to the combination of very rare peak events and limited expressiveness of the Binomial distributions family. In particular, this family cannot guarantee the existence of a peak in a simulated episode without simulating \textit{multiple} peaks per episode (i.e., $P_{\phi^*}^{\pi_\theta} \ne P_{\phi_0,\alpha}^{\pi_\theta}$ in terms of Section~\ref{sec:variance_reduction}).
Yet, CeSoR is demonstrated robust to the poor parameterization selection of $D_\phi$, as it presents a reasonable sampling (see Appendix~\ref{sec:ce_sample_dist}) and improves the returns CVaR.

Figures~\ref{fig:servers_test_results}-\ref{fig:servers_examples} present a detailed analysis of the results. 

\begin{figure}[!ht]
\centering
\begin{subfigure}{.4\textwidth}
  \centering
  \includegraphics[width=1.\linewidth]{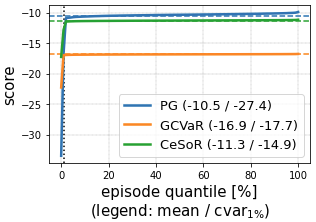}
  \caption{}
  \label{fig:servers_test_scores_full}
\end{subfigure}
\begin{subfigure}{.45\textwidth}
  \centering
  \includegraphics[width=1.\linewidth]{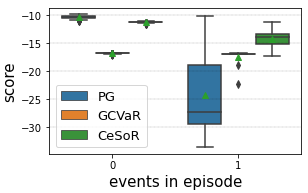}
  \caption{}
  \label{fig:servers_scores_by_events}
\end{subfigure}
\caption{\small (a) The full distribution of the trained agent returns over the test episodes in the Servers Allocation Problem. Note that Figure~\ref{fig:servers_test_scores} displays the left tail of the same distribution.
(b) A box-plot of the returns distribution for test episodes -- separately for episodes with and without a peak-overloading event. CeSoR achieves the best scores in episodes with peak events.}
\label{fig:servers_test_results}
\end{figure}

\begin{figure}[!ht]
\centering
\begin{subfigure}{.4\textwidth}
  \centering
  \includegraphics[width=1.\linewidth]{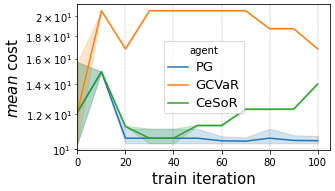}
  \caption{$Mean$}
  \label{fig:servers_valid_mean}
\end{subfigure}
\begin{subfigure}{.4\textwidth}
  \centering
  \includegraphics[width=1.\linewidth]{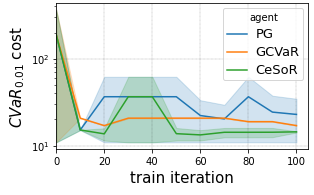}
  \caption{$CVaR_{1\%}$}
  \label{fig:servers_valid_cvar}
\end{subfigure}
\caption{\small Mean and CVaR scores over the validation episodes throughout the Servers Allocation Problem training. The shading corresponds to 95\% confidence-intervals, based on bootstrapping over the episode-samples.}
\label{fig:servers_valid}
\end{figure}

\begin{figure}[!ht]
\centering
  \includegraphics[width=.35\linewidth]{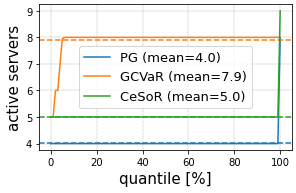}
\caption{\small The distribution of the number of servers allocated by each agent, over all the time-steps in all the test episodes. GCVaR allocates 8 servers in advance, whereas PG and CeSoR typically allocate 4 and 5 servers, respectively, and add servers as needed in case of overloading.}
\label{fig:servers_policy}
\end{figure}

\begin{figure}[!ht]
\centering
\begin{subfigure}{.9\textwidth}
  \centering
  \includegraphics[width=1.\linewidth]{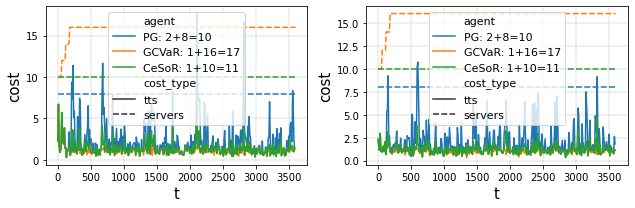}
  \caption{\small Two episodes with no peak events: all agents ave near-zero TTS-cost, and servers cost corresponding to their policy (which is itself shown in Figure~\ref{fig:servers_policy}).}
  \label{fig:servers_examples_calm}
\end{subfigure}
\begin{subfigure}{.9\textwidth}
  \centering
  \includegraphics[width=1.\linewidth]{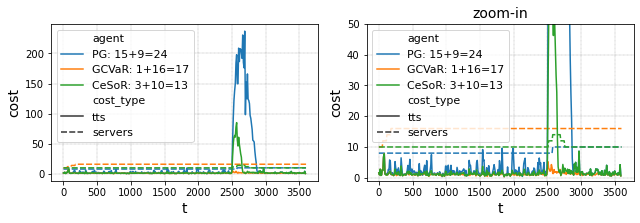}
  \caption{\small An episode with a peak event (right: zoom in around the event). This figure presents the same episode displayed in Figure~\ref{fig:servers_nice}, but normalizes the TTS and the servers allocation to the same units of cost, as defined by the benchmark. Notice that both PG and CeSoR react to the event with allocation of additional servers.}
  \label{fig:servers_example_overloading}
\end{subfigure}
\caption{\small A sample of test episodes in the Servers Allocation Problem. The legends specify the TTS-cost, the servers-cost and the total cost.}
\label{fig:servers_examples}
\end{figure}

\begin{figure}[!ht]
\centering
  \includegraphics[width=.4\linewidth]{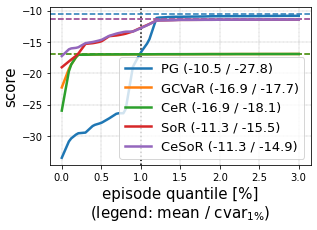}
\caption{\small Additional ablation tests for the Servers Allocation Problem. Note that we reran the experiment for the ablation test, resulting in slightly different returns than Figure~\ref{fig:awesome}. Both CeR and SoR lose to CeSoR in terms of CVaR and mean, indicating the necessity of both soft risk and CE-sampler in CeSoR.}
\label{fig:servers_ablation}
\end{figure}

\FloatBarrier


\section{Distributional Reinforcement Learning for CVaR Optimization}
\label{sec:dRL}

Many RL algorithms aim to learn the value $Q(s,a)$ of a state-action pair, representing the expected return from choosing action $a$ at state $s$. Then, given a state and a finite set of action candidates, the agent can choose the action with the highest value.
In Distributional Reinforcement Learning (DRL), not only the expected return is learned, but rather the whole return distribution -- conditioned on $s$, $a$ and the current policy.
While standard DRL algorithms~\citep{dRL,QRDQN} still optimize the expected return and thus are risk-neutral, the learning of the whole return distribution encourages risk-averse variants as well~\citep{distributional_rl_risk}.

A naive risk-averse DRL agent may simply use the learned return distribution to choose the action with the highest risk measure (e.g., CVaR) over the returns.
However, notice that the return distribution is conditioned on the policy.
Hence, similarly to other RL methods, the learned values become incorrect once we change the policy: the CVaR of the current action does not take into account the change in the next action.
Thus, this naive approach would not truly optimize the CVaR.

Instead, a risk-averse DRL agent can train using a risk-averse actor, such that the learned distribution is consistent with the risk-averse policy.
This approach is valid and was indeed used by \citet{distributional_rl_risk}.
However, it suffers from similar limitations as CVaR-PG.
Regarding sample-efficiency, CVaR-DRL considers only the bottom quantiles of the distribution, whose corresponding loss function assigns very low weights to all the returns except for the lowest ones, reducing the effective sample size. In particular, since there is no separation between low returns and high-risk environment conditions, still only a small portion of the data corresponds to high-risk, and it remains challenging to learn how to act under such conditions.
Regarding blindness to success, CVaR-DRL is still prone to miss beneficial strategies: it still directs the actor policy according to the lowest returns rather than the hardest conditions, and learns the distribution with respect to that policy.

We implemented the methods mentioned above for the Guarded Maze benchmark, on top of the QR-DQN~\citep{QRDQN} implementation of Stable-Baselines~\citep{stable-baselines3}.
As shown in Table~\ref{tab:drl}, none of the DRL variants improved the CVaR return even in comparison to the baseline CVaR-PG (GCVaR):
the standard risk-neutral QR-DQN obtained similar returns to the risk-neutral PG;
the naive DRL approach resulted in a noisy and seemingly-meaningless policy, obtaining worse returns than GCVaR;
and the valid CVaR-DRL obtained identical returns to GCVaR.

These results support the discussion above, indicating that blindness to success and sample-inefficiency are general limitations in risk-averse RL, and in particular apply to DRL in addition to PG.
We hope that our work will pave the way for other efficient risk-averse RL methods, beyond the scope of PG algorithms.

\begin{table}[h]
\caption{\small A comparison of CeSoR test returns to both PG and Distributional RL methods, over the Guarded Maze benchmark. The first two methods are risk-neutral.}
\label{tab:drl}
\vspace{2pt}
\centering
\begin{tabular}{|c|cc|}
\hline
\textbf{Algorithm}               & \textbf{Mean} & \textbf{CVaR\textsubscript{0.05}} \\ \hline
PG                               & \textbf{4}    & -63           \\
QR-DQN                           & \textbf{3}    & -73           \\ \hline
GCVaR                            & -32           & -32           \\
CVaR-QR-DQN (only inference)     & -32           & -39           \\
CVaR-QR-DQN (training+inference) & -32           & -32           \\
CeSoR                            & 2             & \textbf{-7}   \\ \hline
\end{tabular}
\end{table}


\end{document}